




\documentclass{ecai} 



\usepackage{latexsym}
\usepackage{amssymb}
\usepackage{amsmath}
\usepackage{amsthm}
\usepackage{booktabs}
\usepackage{enumitem}
\usepackage{graphicx}
\usepackage{color}

\usepackage[dvipsnames]{xcolor}
\usepackage{times}
\usepackage{svg}
\usepackage{soul}
\usepackage{url}
\usepackage[hidelinks]{hyperref}
\usepackage[utf8]{inputenc}
\usepackage[small]{caption}
\usepackage{graphicx}
\usepackage{amsmath,amssymb,dsfont}
\usepackage{amsthm}
\usepackage{booktabs}
\usepackage{algorithm}
\usepackage{algorithmic}
\usepackage[switch]{lineno}
\usepackage{enumitem}
\usepackage{mathtools}



\newtheorem{theorem}{Theorem}

\newtheorem{corollary}[theorem]{Corollary}
\newtheorem{proposition}[theorem]{Proposition}

\newtheorem{example}{Example}
\newtheorem{definition}{Definition}



\newcommand{\BibTeX}{B\kern-.05em{\sc i\kern-.025em b}\kern-.08em\TeX}


\newcommand{\anEntity}{e}
\newcommand{\entities}{\texttt{ent}}

\newcommand{\aModel}{M}

\newcommand{\prob}{\mathbb{P}}
\newcommand{\expectancy}{\mathbb{E}}

\newcommand{\consistsWith}{\texttt{cw}}

\newcommand{\shap}{Shap}

\newcommand{\anInterval}{\mathcal{I}}
\newcommand{\aRegion}{R}

\newcommand{\ignore}[1]{}

\newcommand{\defeq}{\vcentcolon=}

\newcommand{\R}{\mathds{R}}
\newcommand{\boundary}{\partial}

\newcommand{\aBound}{q}

\usepackage{xspace}

\newcommand{\problemMaxShap}{REGION-MAX-SHAP}
\newcommand{\problemMinShap}{REGION-MIN-SHAP}
\newcommand{\problemAmbiguity}{REGION-AMBIGUITY}
\newcommand{\problemIrrelevancy}{REGION-IRRELEVANCY}
\newcommand{\problemDominance}{FEATURE-DOMINANCE}

\newcommand{\problemVertexCover}{VERTEX-COVER}
\newcommand{\NP}{\textsc{NP}\xspace}
\newcommand{\coNP}{\textsc{coNP}\xspace}
\newcommand{\coNPhard}{\textsc{coNP}-hard\xspace}

\newcommand{\NPhard}{\textsc{NP}-hard\xspace}
\newcommand{\NPcomplete}{\textsc{NP}-complete\xspace}
\newcommand{\sharpPhard}{\#\textsc{P}-hard\xspace}

\newcommand{\aMultilinearPolynomial}{f}

\DeclareFontFamily{U}{mathx}{\hyphenchar\font45}
\DeclareFontShape{U}{mathx}{m}{n}{
      <5> <6> <7> <8> <9> <10>
      <10.95> <12> <14.4> <17.28> <20.74> <24.88>
      mathx10
      }{}
\DeclareSymbolFont{mathx}{U}{mathx}{m}{n}
\DeclareMathSymbol{\bigtimes}{1}{mathx}{"91}

%

\definecolor{darkred}{rgb}{0.55, 0.0, 0.0}
\usepackage[backgroundcolor=orange!20, textcolor={darkred}, textsize=tiny]{todonotes}
\definecolor{green}{RGB}{0,120,0}
\definecolor{hlyellow}{RGB}{250, 250, 190}
\definecolor{edwineditcolor}{RGB}{180,190,165}


\newcommand{\miguel}[1]{\todo[inline,caption={},color=Orange!20, size=\footnotesize]{{\bf Miguel:} #1}}

\begin{document}


\begin{frontmatter}


\paperid{123} 


\title{The Distributional Uncertainty of the SHAP Score in Explainable Machine Learning}


\author[A,D]{\fnms{Santiago}~\snm{Cifuentes}\orcid{0000-0002-6375-2045}\thanks{Corresponding Author. Email: scifuentes@dc.uba.ar}}
\author[B]{\fnms{Leopoldo}~\snm{Bertossi}\orcid{0000-0002-1144-3179}\footnote{Prof. Emeritus, Carleton Univ., Ottawa, Canada. \ bertossi@scs.carleton.ca.\\ Senior Researcher IMFD, Chile.}}
\author[C,A]{\fnms{Nina}~\snm{Pardal}\orcid{0000-0002-5150-6947}} 
\author[A,D]{\fnms{Sergio}~\snm{Abriola}\orcid{0000-0002-1979-5443}}
\author[E]{\fnms{Maria Vanina}~\snm{Martinez}\orcid{https://orcid.org/0000-0003-2819-4735}}
\author[F]{\fnms{Miguel}~\snm{Romero}\orcid{0000-0002-2615-6455}}

\address[A]{Instituto de Ciencias de la Computaci\'on, UBA-CONICET, Argentina}
\address[B]{Universidad San Sebasti\'an, FIAD, Santiago, Chile}
\address[C]{Department of Computer Science, University of Sheffield, UK}
\address[D]{Department of Computer Science, Universidad de Buenos Aires, Argentina}
\address[E]{Artificial Intelligence Research Institute (IIIA-CSIC), Spain}
\address[F]{Department of Computer Science, Universidad Cat\'olica de Chile, Chile}
\address[F]{CENIA Chile}


\begin{abstract}
    Attribution scores reflect how important the feature values in an input entity are for the output of a machine learning model. One of the most popular attribution scores is the SHAP score, which is an instantiation of the general Shapley value used in coalition game theory. The definition of this score relies on a probability distribution on the entity population. Since the exact distribution is generally unknown, it needs to be assigned subjectively or be estimated from data, which may lead to misleading feature scores. 
    In this paper, we propose a principled framework for reasoning on SHAP scores under unknown entity population distributions. In our framework, we consider an uncertainty region that contains the potential distributions, and the SHAP score of a feature becomes a function defined over this region. We study the basic problems of finding maxima and minima of this function, which allows us to determine tight ranges for the SHAP scores of all features. In particular, we pinpoint the complexity of these problems, and other related ones, showing them to be intractable. Finally, we present experiments on a real-world dataset, showing that our framework may contribute to a more robust feature scoring.
\end{abstract}

\end{frontmatter}


\section{Introduction}
\label{sec:intro}

Proposing and investigating different forms of explaining and interpreting the outcomes from AI-based systems has become an effervescent area of research and applications \cite{molnar2020interpretable}, leading to the emergence of the area of Explainable Machine Learning. In particular, one wants to explain results obtained from ML-based classification systems. A widespread approach to achieve this consists in assigning numerical {\em attribution scores} to the feature values that, together, represent a given entity under classification, for which a given label has been obtained. The score of a feature value indicates how relevant it is for this output label.  

One of the most popular attribution scores is the so-called \emph{SHAP score} \cite{lundberg2017unified,shapNature}, which is a particular form of the general \emph{Shapley value} used in coalition game theory \cite{shapley1953,roth}. 
SHAP, as every instantiation of the Shapley value, requires a {\em wealth function} shared by all the players in the coalition game. SHAP uses one that is an expected value based on a probability distribution on the entity population.\footnote{Since SHAP's inception, several variations have been proposed and investigated, but they all rely on some probability distribution. In this work we stick to the original formulation.} 
Since the exact distribution is generally unknown, it needs to be assigned subjectively or be estimated from data. This may lead to different kinds of errors, and in particular, to  misleading feature scores. 

In this work, we propose a principled framework for reasoning on SHAP scores under distributional uncertainty, that is, under an unknown distribution over the entity population.
We focus on \emph{binary classifiers}, i.e., classifiers that returns $1$ (accept) or $0$ (reject). We also assume that the inputs to these classifiers are \emph{binary features}, i.e., features that can take values $0$ or $1$. Furthermore, we focus on product distributions. Their use for SHAP computation is common, and imposes feature independence \cite{arenas-et-al:jmlr23,vandenbroeck-et-al:guy22}. In practice, one frequently uses an {\em empirical product distribution} (of the empirical marginals), which may vary depending on the data set from which the sampling is performed \cite{bertossi-et-al:deem}. 
We see our concentration on product distributions as an important first step towards the distributional analysis of SHAP.  \ignore{The idea behind our framework is to explicitly consider an \emph{uncertainty region} that contains the estimated distributions.}


\ignore{+++
\miguel{agregar referencias y escribir algo defendiendo el choice de product distribution}

 \miguel{explicar que tomamos hyperectangles como regiones y defender la decision; en el caso de distribuciones producto es lo que mas tiene sentido, y sigue siendo bastante expresivo}
 }

Our approach allows us to reinterpret and analyze SHAP as a \emph{function} defined on the uncertainty region. As it turns out, this function is always a polynomial on $n$ variables (where $n$ is the number of features), and hence we refer to it as the \emph{SHAP polynomial}. 
We can then analyze the behavior of this polynomial to gain concrete insights on the importance of a feature.
\begin{example}\label{ex:running_example}
Consider the classifier $M$ given in Table \ref{tab:running_example}. Let $e$ be the null entity (first row), and assume a product distribution $\langle p_x,p_y,p_z\rangle$ over the feature space, e.g., $\prob(x=1,y=0,z=1) = p_x (1-p_y)p_z$. The SHAP score for entity $e$ and feature $z$ depends on the probabilities $p_x$, $p_y$ and $p_z$, and this relation can be expressed through the following function:
\begin{eqnarray}
    \shap(\aModel,\anEntity,z) &=& \shap_{\aModel,\anEntity,z}(p_x, p_y, p_z) \nonumber \\
    &=& \frac{1}{6} p_z(-4p_xp_y + 3p_x + 3p_y). \label{eq:poly}
    \end{eqnarray}
We call this function the SHAP polynomial for entity $e$ and feature $z$ (details are provided in Section~\ref{sec:preliminaries}). Observe that the term $(-4p_xp_y + 3p_x + 3p_y)$ is strictly positive whenever $p_x, p_y \neq 0$, and in those cases the SHAP score for $z$ grows when $p_z$ grows. This is intuitive: as $p_z$ grows the probability that $e'(z) = 1$ for a randomly chosen entity $e'$ increases and this predisposes the classifier towards rejection (three out of four entities with $z=1$ are rejected by $M$). Therefore, the fact $e(z) = 0$ becomes more informative, and consequently the SHAP score increases. Meanwhile, if $p_x = p_y = 0$ then the prediction is $1$ with probability 1 independently of the value of $z$, and the SHAP score of $z$ is 0.
\begin{table}[h]
    \centering
    \begin{tabular}{ccc|c}
        $x$ & $y$ & $z$ & $M$ \\
        \hline
        0 & 0 & 0 & \textbf{1} \\
        0 & 0 & 1 & \textbf{1} \\
        0 & 1 & 0 & \textbf{1} \\
        1 & 0 & 0 & \textbf{1} \\
    \end{tabular}
    \caption{Classifier $M$, it labels the remaining entities with \textbf{0}.}
    \label{tab:running_example}
\end{table}
\end{example}

There are different kinds of analysis one can carry out on the SHAP polynomial. As a first step, in this work we investigate the basic problem of finding maxima and minima of SHAP scores in the given region. This allows us to compute \textit{SHAP intervals} for each feature: a range of all the values that the SHAP score can attain in the uncertainty region. 
We believe these tight ranges to be a valuable tool for reasoning about feature importance  under distributional uncertainty. For instance, the length of the interval for a given feature provides information about the robustness of SHAP for that feature. 
Furthermore, changes of sign in a SHAP interval tells us if and when a feature has negative or positive impact on classifications. A global analysis of SHAP intervals can also be used to {\em rank features} according to their general importance.   

To determine the SHAP intervals it is necessary to find minimal upper-bounds and maximal lower-bounds for the SHAP score in the uncertainty region. Formulated in terms of thresholds, these problems turn out to be in the class \NP{}\footnote{See \cite{garey1979computers} for a standard introduction into the complexity classes considered in this paper.} for a wide class of classifiers. Furthermore, we establish that this problem becomes \NPcomplete even for simple models such as {\em decision trees} (and other classifiers that share some properties with them). Notice that computing SHAP for decision trees can be done in polynomial time under the product and uniform distributions. Actually, this result can be obtained for larger classes of classifiers that include decision trees~\cite{arenas-et-al:jmlr23,vandenbroeck-et-al:guy22}.

We also propose and study three other problems related to the behavior of the SHAP score in the uncertainty region, and obtain the same complexity theoretical results as for the problem of computing the maximum and minimum SHAP score. These problems are: (1) deciding whether there are two different distributions in the uncertainty region such that the SHAP score is positive in one of them and negative in the other one (and therefore there is no certainty on whether the feature contributes positively or negatively to the prediction), (2) deciding if there is some distribution such that the SHAP score is 0 (i.e., if the feature can be considered \textit{irrelevant} in some sense), and (3) deciding if for every distribution in the uncertainty region it holds that a feature $x$ is better ranked than a feature $y$ (i.e., if $x$ \textit{dominates} $y$). 

We remark that the upper bound of \NP{} for all these problems is not evident since they all involve reasoning around polynomial expressions, and in principle we may not have polynomial bounds for the size of the witnesses. Moreover, as we will see further on, the SHAP polynomial cannot even be computed explicitly for most models.

To conclude, we carry out an experimentation to compute these SHAP intervals over a real dataset in order to observe what additional information is provided by the use of the SHAP intervals. 
We find out that, under the presence of uncertainty, most of the rankings are \textit{sensitive} to the choice of the distribution over the uncertainty region: the ranking may vary depending on the chosen distribution, even when taking into account only the top 3 ranked features. We also study how this sensitivity decreases as the precision of the distribution estimation increases.

\paragraph{Related work.} 
Close to our work, but aiming towards a different direction, we find the problem of {\em distributional shifts} \cite{carvahlo-et-al:carvalho2019machine,molnar2020interpretable,linardatos-et-al:linardatos2020explainable}, which in ML  occur when the distribution of the training data differs from the data the classifier  encounters in a particular application. This discrepancy poses significant challenges, as can lead to decreased performance and unexpected behavior of models.

Also related is the problem of score {\em robustness} \cite{robust,marques-silva}: one can analyze how scores change under small perturbations of an input. In our case, we study uncertainty at the level of the underlying probability distributions.



The work \cite{slack2021reliable} also tries to address uncertainty in the importance of features for local explanations, but does so from a Bayesian perspective: they use a novel sampling procedure to estimate credible intervals around the mean of the feature importance, and derive closed-form expressions for the number of perturbations required to reach explanations within the desired levels of confidence. 

Finding optimal intervals under uncertainty as done here, is reminiscent of finding tight ranges for aggregate queries from uncertain databases which are repaired due to violations of integrity constraints~\cite{arenas-et-al:aggr}.

Finally, other lines of work such as~\cite{merrick2020explanation} aim to understand the uncertainty that arises from approximation errors when computing the Shapley values via a sampling procedure over the feature space. In such contexts, the distribution is usually assumed as given, and therefore these works focus on formalizing a scenario that differs from ours. Moreover, all our analyses and algorithms are based on optimal confidence intervals that arise from exact computation of the Shapley values.


\paragraph{Our contributions.}
In this work we make the following contributions:
\begin{enumerate}
    \item We propose a new approach to understand the SHAP score by interpreting it as a polynomial evaluated over an uncertainty region of probability distributions.
    \item We analyze at which points of the uncertainty region the maximum and minimum values for SHAP are attained. 
    \item We establish \NP-completeness of deciding if the score of a feature can be larger than a given threshold; we also show \NP-completeness for some related problems. 
    \item We provide experimental results showing how SHAP scores can vary over the uncertainty region, and how considering uncertainty makes it possible to define more nuanced rankings of feature importance.
\end{enumerate}

\paragraph{Organization.}
This paper is structured as follows: \ In Section~\ref{sec:preliminaries} we introduce notation and recall basic definitions. In Section~\ref{sec:framework}, we formalize our problems and obtain the first results. Section~\ref{sec:complexity} presents our main complexity results. In Section~\ref{sec:experiments}, we describe our experiments and show their outcome. Finally, in Section~\ref{sec:conclusions}, we make some final remarks and point to open problems. Proofs for all our results can be found in  \cite{cifuentes2024distributional}.

\section{Preliminaries}
\label{sec:preliminaries}

Let $X$ be a finite set of \emph{features}. An \emph{entity} $e$ over $X$ is a mapping $e:X \to \{0,1\}$.
We denote by $\entities(X)$ the set of all entities over $X$. 
Given a subset of features $S \subseteq X$ and an entity $e$ over $X$, we define the set of entities \emph{consistent with} $e$ on $S$ as: 
\begin{align*}
\consistsWith(e, S) \defeq \{e' \in \entities(X): e'(x)=e(x) \text{ for all } x \in S\}. 
\end{align*}

As already discussed in Section~\ref{sec:intro}, we shall consider \emph{product distributions} as our basic probability distributions over the entity population $\entities(X)$. A product distribution $\prob$ over $\entities(X)$ is parameterized by values $(p_x)_{x\in X}$. For every $e\in \entities(X)$ we have:
\begin{align*}
    \prob(e) = \prod_{x\in X : e(x)=1} p_x \prod_{x \in X : e(x)=0} (1-p_x).
\end{align*}
That is, each feature value $e(x)$ is chosen independently with a probability according to $p_x$ ($e(x)=1$ with probability $p_x$).

A \emph{(binary) classifier or model} $M$ over $X$ is a mapping $M:\entities(X) \to \{0,1\}$. We say that $M$ \emph{accepts} $e$ if $M(e) = 1$, otherwise $M$ \emph{rejects} the entity.
Let $M$ be a binary classifier and $e$ an entity, both over $X$. We define the function $\phi_{M,e}:2^{X}\to [0,1]$ as:
\begin{align*}
   \phi_{M,e}(S)\defeq \expectancy[M\, |\,  \consistsWith(e, S)].
\end{align*}
In other words, $\phi_{M,e}(S)$ is the expected value of $M$ conditioned to the event $\consistsWith(e, S)$. More explicitly:
\begin{align*}
   \phi_{M,e}(S) = \sum_{e' \in \consistsWith(e, S)} \prob(e'\, |\, \consistsWith(e, S)) M(e').
\end{align*}
A direct calculation shows that the conditional probability $\prob(e'\, |\, \consistsWith(e, S))$ can be written as:

\begin{align*}
    \prob(e'\, |\, \consistsWith(e, S)) = \prod_{x \in X \setminus S : e'(x)=1} p_x \prod_{x \in X \setminus S : e'(x) = 0} (1-p_x).
\end{align*}

\medskip

The function $\phi_{M,e}$ can be used as the wealth function in the general formula of the Shapley value~\cite{shapley1953,roth} to obtain the SHAP score of the feature values in $e$.

\begin{definition}[SHAP score]
    Given a classifier $M$ over a set of features $X$, an entity $e$ over $X$, and a feature $x \in X$, the \emph{SHAP score} of feature $x$ with respect to $M$ and $e$ is 
    
\begin{align*}
    \shap(M,e,x) \defeq \sum_{S \subseteq X \setminus x} c_{|S|} \left(\phi_{M,e}(S \cup \{x\}) - \phi_{M,e}(S)\right),
\end{align*}

where $c_{i} \defeq \frac{i!(|X| - i - 1)!}{|X|!}$. 
\end{definition}

Intuitively, the SHAP score intends to measure how the inclusion of $x$ affects the conditional expectation of the prediction. In order to do this, it considers every possible subset $S\subseteq X \setminus \{x\}$ of the features and compares the expectation for the set $S$ against $S \cup \{x\}$. A  score close to $1$ implies that $x$ heavily leans the classifier $M$ towards acceptance, while a score close to $-1$ indicates that it leans the prediction towards rejection (note that SHAP always takes values in $[-1,1]$).

\begin{example}
    Consider again the model $M$ from Table~\ref{tab:running_example}. It can be shown that if $p_x = p_y = \frac{1}{2}$ and $p_z = \frac{3}{4}$ then feature $z$ has SHAP score 0.25 while $x$ and $y$ have score 0.1875. Meanwhile, if $p_z=\frac{1}{4}$ then $Shap(M,e,z) \sim 0.08$ and $Shap(M,e,x) = Shap(M,e,y) \sim 0.15$. 
\end{example}

In practical applications, the exact distribution of entities is generally unknown and subjectively assumed or estimated from data. 
The previous example shows that the choice of the underlying distribution can have severe effects when establishing the importance of features in the classifications. To overcome these problems, in the next section we  formalize the notion of distributional uncertainty and
present our framework for reasoning about SHAP scores in that setting.

\vspace{-2mm}
\section{SHAP under Distributional Uncertainty}
\label{sec:framework}


The general idea of our framework is as follows: we explicitly consider a set that contains the  potential distributions. This provides us with what we call the \emph{uncertainty region}. This allows us to  reinterpret the SHAP score as a \emph{function} from the uncertainty region to $\mathbb{R}$. We can then analyze the behavior of this function in order to gain concrete insights about the importance of a feature.

Recall that a product distribution is determined by its parameters $(p_x)_{x\in X}$. For convenience, we always assume an implicit ordering on the features. Hence, we can identify our space of probability distributions over $\entities(X)$ with the set $[0,1]^{|X|}$. In order to define uncertainty regions, it is natural then to consider \emph{hyperrectangles} $\anInterval \subseteq [0,1]^{|X|}$, i.e., subsets of the form $\anInterval=\bigtimes_{x\in X} [a_x, b_x]$. Intuitively, these regions correspond to independently choosing a confidence interval $[a_x, b_x]$ for the unknown probability $p_x$, for each feature $x\in X$.

\begin{example}
Within the setting from  Example \ref{ex:running_example}, consider the following uncertainty regions defined by hyperrectangles $\anInterval_1$ and $\anInterval_2$, respectively:
\[
\anInterval_1 := [\frac{1}{3}, \frac{1}{2}] \times [1, 1] \times [\frac{1}{3}, \frac{2}{3}]
\]
\[
\anInterval_2 := [\frac{1}{2}, \frac{1}{2}] \times [\frac{1}{2}, 1] \times [0, \frac{1}{2}]
\]
Notice that the SHAP polynomial in region 1 attains a maximum at $p_x=\frac{1}{3}$, $p_y=1$, and $p_z=\frac{2}{3}$, where the maximum score is $\frac{8}{27}$. The minimum value, corresponding to the score $\frac{5}{36}$, is attained at $p_x=\frac{1}{2}$, $p_y=1$, $p_z=\frac{1}{3}$.

For the second region, the maximum is attained at $p_x=\frac{1}{2}$, $p_y=1$, and $p_z=\frac{1}{2}$, and the maximum value is $\frac{5}{24}$. Similarly, the minimum score $0$ is attained whenever $p_z=0$.
\end{example}

The SHAP score now becomes a function taking probability distributions and returning real values. This is formalized below.

\begin{definition}[SHAP polynomial]
 Given a classifier $M$ over a set of features $X$, an entity $e$ over $X$, and a feature $x \in X$, the \emph{SHAP polynomial} $\shap_{M,e,x}$  is the function from $[0,1]^X$ to $\mathbb{R}$ mapping each $(p_x)_{x\in X}$ to the SHAP score $\shap(M,e,x)$ using $(p_x)_{x\in X}$ as the underlying product distribution. 
\end{definition}

As the name suggests, the SHAP polynomial $\shap_{M,e,x}$ is actually a multivariate polynomial on the variables $(p_x)_{x\in X}$. Moreover, it is a \emph{multilinear} polynomial: it is linear on each of its variables separately (equivalently, no variable occurs at a power of $2$ or higher).

\begin{proposition}
Given a classifier $M$ over a set of features $X$, an entity $e$ over $X$, and a feature $x \in X$, the \emph{SHAP polynomial} $\shap_{M,e,x}$ is a multilinear polynomial on variables $(p_x)_{x\in X}$.
\end{proposition}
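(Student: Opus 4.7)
The plan is to unfold the definition of $\shap_{M,e,x}$ all the way down to the conditional probabilities $\prob(e'\,|\,\consistsWith(e,S))$ and show that at each level of this unfolding we stay within the class of multilinear polynomials in $(p_x)_{x\in X}$.

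First, I would inspect the explicit formula
\[
\prob(e'\,|\,\consistsWith(e, S)) = \prod_{x \in X \setminus S : e'(x)=1} p_x \prod_{x \in X \setminus S : e'(x) = 0} (1-p_x),
\]
given right before the definition of the SHAP score. For each variable $p_x$, this expression contains either the factor $p_x$ (if $x\notin S$ and $e'(x)=1$), or the factor $1-p_x$ (if $x\notin S$ and $e'(x)=0$), or no factor involving $p_x$ at all (if $x\in S$). In each case the factor is a polynomial of degree at most $1$ in $p_x$, and no two factors of this product share a variable. Hence each such conditional probability is, as a function of $(p_x)_{x\in X}$, a multilinear polynomial.

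Next, I would lift this to $\phi_{M,e}(S)$. By definition,
\[
\phi_{M,e}(S) = \sum_{e' \in \consistsWith(e, S)} \prob(e'\,|\,\consistsWith(e, S))\, M(e'),
\]
which is an $\mathbb{R}$-linear combination of the multilinear polynomials computed in the previous step (the coefficients $M(e')$ are just real constants in $\{0,1\}$). Since the class of multilinear polynomials is closed under real linear combinations, $\phi_{M,e}(S)$ is multilinear in $(p_x)_{x\in X}$ for every $S\subseteq X$.

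Finally, for the SHAP polynomial itself I would observe that
\[
\shap_{M,e,x} \;=\; \sum_{S \subseteq X \setminus \{x\}} c_{|S|}\,\bigl(\phi_{M,e}(S \cup \{x\}) - \phi_{M,e}(S)\bigr)
\]
is again an $\mathbb{R}$-linear combination of multilinear polynomials, so it is multilinear, concluding the proof. There is no real obstacle here: the argument is a clean three-step closure argument (single term is multilinear $\Rightarrow$ $\phi_{M,e}(S)$ is multilinear $\Rightarrow$ SHAP is multilinear). The only mildly delicate point, which I would make explicit, is that a variable $p_x$ not appearing in a given summand still counts as occurring at degree $\le 1$, so that multilinearity is preserved uniformly across all variables in $X$ and under addition.
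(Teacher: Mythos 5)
Your proof is correct and follows essentially the same route as the paper's: show $\phi_{M,e}(S)$ is multilinear and conclude by closure of multilinear polynomials under linear combinations. The only difference is that you make explicit the step the paper leaves as an assertion, namely that each conditional probability $\prob(e'\,|\,\consistsWith(e,S))$ is multilinear because distinct degree-$\le 1$ factors involve distinct variables; this is a worthwhile elaboration but not a different argument.
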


\begin{proof}
Note that $\phi_{M,e}(S)$ is a multilinear polynomial for any subset of features $S\subseteq X$. Since $Shap_{M, e, x}$ is a weighted sum of expressions of the form $\phi_{M, e}(S)$ the result follows by observing that multilinear polynomials are closed with respect to sum and product by constants.
\end{proof}

We can then reason about the importance of features via the analysis of SHAP polynomials. Here we concentrate on the fundamental problems of finding maxima and minima of these polynomials over the uncertainty region. This allows us to determine the \emph{SHAP interval} of a feature, i.e., the set of possible SHAP scores of the feature over the uncertainty region. SHAP intervals may provide useful insights. For instance, obtaining smaller SHAP intervals for a feature suggests its SHAP score is more robust against uncertainty. On the other hand, they can be used to assess the relative importance of features (see Section~\ref{sec:experiments} for more details).

We aim to characterize the complexity of these problems, thus we formulate them as decision problems\footnote{The encoding of $M$ depends on the class of classifiers considered, while $\anInterval$ is given by listing the rationals $a_i, b_i$ ($1\leq i\leq n$).}:
\begin{center}
\fbox{\begin{minipage}{20em}
  \textsc{Problem}: \problemMaxShap
  
\textsc{Input}: A classifier $M$, an entity $e$, a feature $x$, a hyperrectangle  $\anInterval$ and a rational number $\aBound$.

\textsc{Output}: Is there a point $\textbf{p} \in \anInterval$ such that $Shap_{M,e,x}(\textbf{p}) \geq \aBound$?.
\end{minipage}}
\end{center}
The problem \problemMinShap{} is defined analogously by requiring $Shap_{M,e,x}(\textbf{p}) \leq \aBound$. 
We also consider some related problems: 
\begin{itemize}
    \item \problemAmbiguity{}: given a classifier $M$, an entity $e$, a feature $x$, and a hyperrectangle  $\anInterval$, check whether there are two points $\textbf{p}_1, \textbf{p}_2  \in \anInterval$ such that $Shap_{M,e,x}(\textbf{p}_1) > 0$ and $Shap_{M,e,x}(\textbf{p}_2) < 0$. This problem can be understood as a simpler test for robustness (in comparison to actually computing the SHAP intervals).
    \item \problemIrrelevancy{}:
    given a classifier $M$, an entity $e$, a feature $x$, and a hyperrectangle  $\anInterval$, check whether there is a point $\textbf{p} \in \anInterval$ such that $Shap_{M,e,x}(\textbf{p}) = 0$. This is the natural adaptation of checking \emph{irrelevancy} of a feature (score equal to $0$) to the uncertainty setting.
    \item \problemDominance{}: 
    given a classifier $M$, an entity $e$, features $x$ and $y$, and a hyperrectangle  $\anInterval$, check whether $x$ \emph{dominates} $y$, that is, for all points $\textbf{p} \in \anInterval$, we have $Shap_{M,e,x}(\textbf{p}) \geq Shap_{M,e,y}(\textbf{p})$. The notion of dominance provides a safe way to compare features under uncertainty.
\end{itemize}


\vspace{-3mm}
\section{Complexity Results}
\label{sec:complexity}

We now present our main technical contributions, namely, we pinpoint the complexity of the problems presented in the previous section. 

\subsection{Preliminaries on multilinear polynomials}
\label{sec:prelims-multinlinear}

A \emph{vertex} of a hyperrectangle $\anInterval=\times_{i=1}^n [a_i,b_i]$ is a point $\textbf{p}\in \anInterval$ such that $\textbf{p}_i\in \{a_i,b_i\}$ for each $1\leq i\leq n$. The following is a well-known fact about multilinear polynomials (see e.g., ~\cite{laneve-et-al:laneve2010interval}).
For completeness we provide a simple self-contained proof in Appendix~\ref{sec:app-prelims-multilinear}. 

\begin{proposition}
\label{prop:general-maximizesInVertices}
Let $f$ be a multilinear polynomial over $n$ variables. Let $\anInterval\subseteq [0,1]^n$ be a hyperrectangle.
Then the maximum and minimum of $f$ restricted to $\anInterval$ is attained in the vertices of $\anInterval$.
\end{proposition}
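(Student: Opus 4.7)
The plan is to prove Proposition~\ref{prop:general-maximizesInVertices} by induction on the number of variables $n$. The key observation is that the defining feature of a multilinear polynomial is precisely that, when all but one variable are fixed, the resulting univariate function is linear (affine), and a linear function on a closed interval attains its extrema at the endpoints.

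For the base case $n=1$, $f$ is an affine function $\alpha x_1 + \beta$ on $[a_1,b_1]$, whose maximum and minimum are attained at $a_1$ or $b_1$, which are exactly the two vertices of the one-dimensional hyperrectangle. For the inductive step, assuming the claim for $n-1$ variables, I would decompose $f$ along one variable, say $x_n$, writing
\begin{equation*}
f(x_1,\ldots,x_n) \;=\; g(x_1,\ldots,x_{n-1})\cdot x_n \;+\; h(x_1,\ldots,x_{n-1}),
\end{equation*}
where multilinearity of $f$ guarantees that $g$ and $h$ are themselves multilinear polynomials in the remaining $n-1$ variables. Let $\anInterval' = \bigtimes_{i=1}^{n-1}[a_i,b_i]$. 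For any fixed $\textbf{p}' \in \anInterval'$, the map $t \mapsto f(\textbf{p}',t)$ is affine in $t$ on $[a_n,b_n]$, so
\begin{equation*}
\max_{\textbf{p}\in \anInterval} f(\textbf{p}) \;=\; \max\!\Bigl\{\,\max_{\textbf{p}'\in\anInterval'} f(\textbf{p}',a_n),\; \max_{\textbf{p}'\in\anInterval'} f(\textbf{p}',b_n)\,\Bigr\}.
\end{equation*}
The two inner maxima are maxima of multilinear polynomials in $n-1$ variables over $\anInterval'$, so by the induction hypothesis each is attained at a vertex of $\anInterval'$. Appending the corresponding choice of $a_n$ or $b_n$ yields a vertex of $\anInterval$, giving the desired conclusion for the maximum. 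The argument for the minimum is symmetric.

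There is no real obstacle here: the proposition is a textbook fact and the induction is straightforward. The only point that deserves a clean verification is that the decomposition $f = g\cdot x_n + h$ produces $g$ and $h$ that do not involve $x_n$ (hence are truly polynomials in $n-1$ variables), which follows immediately from the fact that no variable of $f$ appears with exponent greater than $1$. Everything else is bookkeeping, so I would keep the written proof short and just cite the standard reference for anyone wanting more detail.
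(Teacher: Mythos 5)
Your proof is correct. It takes a mildly different route from the paper's: you run a direct induction on the dimension of the hyperrectangle, using the decomposition $f = g(x_1,\dots,x_{n-1})\,x_n + h(x_1,\dots,x_{n-1})$ and the fact that an affine function on $[a_n,b_n]$ peaks at an endpoint, then invoke the inductive hypothesis on the two facets $x_n=a_n$ and $x_n=b_n$. The paper instead first proves a strictly more general auxiliary lemma --- that a multilinear polynomial attains its extrema on the boundary $\boundary R$ of \emph{any} closed bounded region $R\subseteq\mathbb{R}^n$, via a perturbation argument showing the coefficient of $x_1$ must vanish at an interior maximizer --- and then applies that lemma recursively $n$ times to the hyperrectangle. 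The two arguments share the same engine (linearity in each variable separately), but yours is more self-contained and avoids the slightly delicate $\varepsilon_1,\varepsilon_2$ perturbation step, while the paper's buys the extra generality of the boundary statement for arbitrary closed bounded regions, which it states as a standalone proposition. For the proposition as stated, your induction suffices; the only hygiene points are the ones you already flag (that $g$ and $h$ genuinely omit $x_n$, and implicitly that the maxima exist by compactness and continuity).
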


Proposition~\ref{prop:general-maximizesInVertices} yields two algorithmic consequences. On the one hand, it induces an algorithm to find the maximum of $f$ over $\anInterval$ in time $2^n poly(|f|)$: simply evaluate the polynomial on all the vertices, and keep the maximum\footnote{We assume $f$ is given by listing its non-zero coefficients and their corresponding monomials.}. 
On the other hand, it shows that this problem is certifiable: to decide whether $f$ can reach a value as big as $\aBound$, we just need to guess the corresponding vertex and evaluate it. We show that, within the usual complexity theoretical assumptions, there is no polynomial algorithm to find this maximum (see Appendix~\ref{sec:app-prelims-multilinear}): 

\begin{theorem}\label{teo:npcompleteMaximizeMultilinear}
    Given a multilinear polynomial $\aMultilinearPolynomial$, a hyperrectangle $\anInterval = \bigtimes_{i=1}^n [a_i, b_i]$, and a rational $\aBound$, the problem of deciding whether there is an $\textbf{x} \in \anInterval$ such that $\aMultilinearPolynomial(\textbf{x}) \geq \aBound$ is \NPcomplete.
\end{theorem}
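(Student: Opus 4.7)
The plan is to prove both the upper and lower bounds separately, with the hardness coming from a reduction from Independent Set (equivalently, Vertex Cover).

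For membership in \NP{}, I would invoke Proposition~\ref{prop:general-maximizesInVertices}: since the maximum of $\aMultilinearPolynomial$ over $\anInterval$ is attained at some vertex $\textbf{p}\in\anInterval$ and vertices are points $\textbf{p}$ with $\textbf{p}_i\in\{a_i,b_i\}$, a vertex has a description of size linear in the input. A non-deterministic machine can guess such a vertex, evaluate $\aMultilinearPolynomial$ at it in polynomial time (since $\aMultilinearPolynomial$ is given as a list of monomials with their coefficients), and accept iff the value is at least $\aBound$.

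For \NP-hardness, I would reduce from Independent Set: given a graph $G=(V,E)$ and an integer $k$, decide whether $G$ contains an independent set of size at least $k$. The reduction constructs the multilinear polynomial
\begin{equation*}
\aMultilinearPolynomial(\textbf{x}) \;\defeq\; \sum_{v\in V} x_v \;-\; (|V|+1) \sum_{\{u,v\}\in E} x_u x_v,
\end{equation*}
takes $\anInterval \defeq [0,1]^{|V|}$, and sets $\aBound \defeq k$. The reduction is clearly polynomial. For correctness, by Proposition~\ref{prop:general-maximizesInVertices} the maximum of $\aMultilinearPolynomial$ on $\anInterval$ is attained at a vertex, i.e., at some $\textbf{x}\in\{0,1\}^{|V|}$, which is in bijection with a subset $S\subseteq V$ via $S=\{v : x_v=1\}$. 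If $S$ is independent then $\aMultilinearPolynomial(\textbf{x})=|S|$; otherwise $S$ contains at least one edge, and $\aMultilinearPolynomial(\textbf{x}) \leq |V| - (|V|+1) = -1$. Hence $\max_{\textbf{x}\in\anInterval} \aMultilinearPolynomial(\textbf{x}) \geq k$ iff $G$ has an independent set of size at least $k$ (we can assume $k\geq 1$ without loss of generality, since otherwise the instance is trivial).

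The main obstacle I anticipate is making sure the hardness reduction produces a polynomial whose maximum truly encodes the independence number, and in particular that the penalty coefficient is large enough to force the maximizer to correspond to an independent set. The choice $|V|+1$ suffices because even the all-ones subset gains at most $|V|$ from the linear part, so one edge of penalty already drives the value below any nonnegative threshold $k \geq 1$. Everything else reduces to routine verification: the reduction is computable in polynomial time, the polynomial is syntactically multilinear as required, and the threshold $\aBound=k$ is a rational presented in unary-bounded size. Together with the \NP{} upper bound from Proposition~\ref{prop:general-maximizesInVertices}, this yields \NP-completeness.
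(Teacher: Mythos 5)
Your proposal is correct, and the \NP{} membership argument (guess a vertex of $\anInterval$, justified by Proposition~\ref{prop:general-maximizesInVertices}, and evaluate) is exactly the one in the paper. The hardness reduction, however, takes a genuinely different route. The paper reduces from 3-SAT: each clause $l_1\vee l_2\vee l_3$ is turned into the product $g_{lit}(l_1)g_{lit}(l_2)g_{lit}(l_3)$ with $g_{lit}(x_k)=1-x_k$ and $g_{lit}(\lnot x_k)=x_k$, and the polynomial $-\sum_j g_{clause}(c_j)$ is nonpositive on $[0,1]^n$ and reaches the threshold $\aBound=0$ exactly on (points extending to) satisfying assignments; this yields a degree-$3$ multilinear polynomial with small coefficients and a fixed threshold. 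You instead reduce from Independent Set with a linear reward $\sum_v x_v$ minus an edge penalty of weight $|V|+1$, and your correctness argument is sound: at any vertex of $[0,1]^{|V|}$ the value is $|S|$ if $S$ is independent and at most $-1$ otherwise, so the maximum meets $\aBound=k\geq 1$ iff $\alpha(G)\geq k$ (with $k\leq 0$ handled trivially). Your version buys something the paper's does not: hardness already for multilinear \emph{quadratics}, and it is stylistically closer to the penalty-plus-size-term machinery the paper itself deploys later in the \problemVertexCover{} reduction for Theorem~\ref{teo:npcomplete_decisiontrees}. The paper's version, in exchange, needs only coefficients of magnitude $O(m)$ and a threshold of $0$, and extends verbatim to arbitrary CNF. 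Both reductions are polynomial-time and both establish the stated \NP-completeness.
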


\subsection{Complexity of \problemMaxShap{}}
\label{sec:complexity-region-max-shap}

It is well-known that computing SHAP scores is hard for general classifiers. For instance, the problem is already \sharpPhard{} when considering \emph{Boolean circuits} \cite{arenas-et-al:jmlr23} or \emph{logistic regression models} \cite{vandenbroeck-et-al:guy22}. For {\em linear perceptrons}, model counting is intractable \cite{barcelo-et-al:neurips20} and by the results in \cite{arenas-et-al:jmlr23}, it follows that SHAP computation for perceptrons is also intractable. \ 
On the other hand, computing maxima of SHAP polynomials for a certain class of classifiers is as hard as computing SHAP scores for that class of classifiers:
if we consider the hyperrectangle consisting of a single point $\anInterval = \bigtimes_{i=1}^n [p_i,p_i]$, the maximum of the SHAP polynomial coincides with the SHAP score for the product distribution $(p_i)_{1\leq i\leq n}$. 
Therefore, we focus on family of classifiers where the SHAP score can be computed in polynomial time. A prominent example is the class of \emph{decomposable and deterministic Boolean circuits}, whose tractable SHAP score computation has been shown recently~\cite{arenas-et-al:jmlr23}. This class contains as a special case the well-known class of \emph{decision trees}. For a formal definition see ~\cite{darwiche2001,DM2002}.
As a consequence of Proposition~\ref{prop:general-maximizesInVertices}, we obtain the following complexity upper bound:

\begin{corollary}
\label{coro:np-region-max-shap}
    Let $\cal F$ be a class of classifiers for which computing the SHAP score for given product distributions can be done in polynomial time. Then \problemMaxShap{} is in \NP{} for the class $\cal F$. In particular, \problemMaxShap{} is in \NP{} for decomposable and deterministic Boolean circuits.
\end{corollary}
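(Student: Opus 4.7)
The plan is to use Proposition~\ref{prop:general-maximizesInVertices} directly as the key structural tool. Since the SHAP polynomial $\shap_{M,e,x}$ is multilinear on the variables $(p_x)_{x\in X}$, its maximum over the hyperrectangle $\anInterval = \bigtimes_{x \in X}[a_x, b_x]$ is attained at some vertex, i.e., at a point $\textbf{p} \in \anInterval$ with $\textbf{p}_x \in \{a_x, b_x\}$ for every feature $x$. This immediately gives a polynomial-size witness: the \NP{} machine guesses, for each of the $n = |X|$ coordinates, whether to pick $a_x$ or $b_x$, producing a vertex $\textbf{p}$ encoded in size linear in the input.

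Next, I would argue that this witness can be verified in polynomial time. The crucial observation is that evaluating $\shap_{M,e,x}(\textbf{p})$ at a specific point $\textbf{p} \in [0,1]^X$ is exactly the computation of the SHAP score $\shap(M,e,x)$ under the product distribution whose parameters are $(p_x)_{x \in X} = \textbf{p}$. Hence, under the hypothesis that $\cal F$ admits polynomial-time SHAP computation for product distributions, this evaluation runs in time polynomial in the input size (including the bit-length of the rationals $a_x, b_x$). Comparing the evaluated value against the threshold $\aBound$ is then a straightforward rational arithmetic check. Combining the guess and the verification gives membership in \NP{}.

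For the second part, I would simply invoke the result of~\cite{arenas-et-al:jmlr23}, which establishes that SHAP score computation for decomposable and deterministic Boolean circuits under product distributions is tractable. Plugging this class into the general statement yields the specific \NP{} upper bound.

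There is essentially no hard step here; the one subtlety worth making explicit is ensuring that the bit-size of $\shap_{M,e,x}(\textbf{p})$ remains polynomial so that the comparison with $\aBound$ is legitimate in \NP{}. This follows because the polynomial-time SHAP algorithm of the assumed class must produce an output of polynomial bit-length by definition, so no additional argument is needed beyond citing the hypothesis.
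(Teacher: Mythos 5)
Your proof is correct and follows exactly the paper's argument: Proposition~\ref{prop:general-maximizesInVertices} guarantees the maximum is attained at a vertex of the hyperrectangle, which serves as a polynomial-size certificate verifiable by the assumed polynomial-time SHAP algorithm for product distributions, with the decomposable-and-deterministic-circuit case following from~\cite{arenas-et-al:jmlr23}. Your added remark about the bit-size of the evaluated score is a reasonable point of care that the paper leaves implicit.
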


Next, we show a matching lower bound for \problemMaxShap{}. Interestingly, this holds even for decision trees. We stress that the \NP-hardness of \problemMaxShap{} does not follow directly from Theorem~\ref{teo:npcompleteMaximizeMultilinear}: in \problemMaxShap{} the multilinear polynomial is given implicitly, and it is by no means obvious how to encode the multilinear polynomials used in the hardness argument of Theorem~\ref{teo:npcompleteMaximizeMultilinear}. Instead, we follow a different direction and encode directly the classical problem of \problemVertexCover.

\begin{theorem}\label{teo:npcomplete_decisiontrees}
    The problem \problemMaxShap{} is \NPhard{} for decomposable and deterministic Boolean circuits. The result holds even when restricted to decision trees.
\end{theorem}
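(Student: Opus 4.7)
The membership in \NP{} is already provided by Corollary~\ref{coro:np-region-max-shap}, so I focus on \NP-hardness. I would reduce from \problemVertexCover{}. Given an instance $(G=(V,E),k)$, I aim to construct in polynomial time a decision tree $M$, an entity $e$, a feature $x$, a hyperrectangle $\anInterval$, and a rational threshold $\aBound$ such that $G$ admits a vertex cover of size at most $k$ if and only if there exists $\mathbf{p}\in\anInterval$ with $\shap_{M,e,x}(\mathbf{p})\geq\aBound$.

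The geometry of the reduction leans on Proposition~\ref{prop:general-maximizesInVertices}. I would take a feature set containing $x$, one feature per vertex $v\in V$, and polynomially many gadget features; the hyperrectangle would range over $[0,1]$ for each vertex-feature, while the remaining features are pinned to singleton intervals. Since the SHAP polynomial is multilinear, its maximum on $\anInterval$ is attained at one of its vertices, and these vertices are in bijection with the subsets $C\subseteq V$ through the correspondence $C=\{v:p_v=1\}$. At the corner $\mathbf{p}^C$ the product distribution becomes deterministic on the vertex-features, so every conditional expectation $\phi_{M,e}(S)$ collapses into a single evaluation $M(e_{C,S})$, where $e_{C,S}$ agrees with $e$ on $S$ and with the characteristic vector of $C$ on $V\setminus S$. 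Consequently, $\shap_{M,e,x}(\mathbf{p}^C)$ reduces to an explicit combinatorial sum over subsets of $V$, weighted by the Shapley coefficients $c_{|S|}$. The plan is to design $M$ and $e$ so that this sum equals a fixed constant $A$ minus a penalty proportional to $|C|$ whenever $C$ is a vertex cover of $G$, and is strictly smaller otherwise; choosing $\aBound=A-\beta k$ for an appropriate $\beta$ then encodes the existence of a vertex cover of size at most $k$.

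The main technical obstacle is producing $M$ as a polynomial-size decision tree rather than as a more expressive classifier. A direct encoding of the cover condition $\bigwedge_{(u,v)\in E}(x_u\vee x_v)$ conjoined with $x$ yields a small decomposable and deterministic circuit, which would immediately establish the first part of the theorem; but this function has no polynomial-size decision tree in general. To carry the lower bound down to decision trees, I would introduce gadget features pinned via singleton intervals to encode edges one at a time, and use a polynomial-size tree whose branches read each edge-gadget and then perform a bounded amount of vertex checking, relying on the Shapley aggregation to perform the global vertex-cover check implicitly through the weighted sum over $S$. The delicate part is calibrating the tree outputs and the auxiliary probabilities so that the Shapley weights combine into a strictly positive gap between vertex covers of size at most $k$ and every other corner of $\anInterval$, with the gap large enough to be separated by a rational threshold $\aBound$ of polynomial bit-size.
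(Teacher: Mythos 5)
Your skeleton matches the paper's: a reduction from \problemVertexCover{}, with one feature per node, the vertex-features ranging over $[0,1]$ so that (via Proposition~\ref{prop:general-maximizesInVertices}) the corners of $\anInterval$ are in bijection with subsets $C\subseteq V$, and a SHAP polynomial engineered to consist of a size term in $|C|$ plus penalties for uncovered edges, separated by a threshold tuned to $k$. But the proposal stops exactly where the proof begins: the classifier $M$ is never constructed, and every quantitative claim (``equals a fixed constant minus a penalty proportional to $|C|$'', ``strictly smaller otherwise'', ``gap large enough'') is deferred. The paper's construction is concrete and short: $X=\{x_0\}\cup V\cup\{w\}$, $e_0$ is the null entity, and $M$ accepts only $m+1$ entities --- for each edge $\{u,v\}$ the entity with exactly $x_0,u,v$ set to $1$, plus the single entity with exactly $x_0,w$ set to $1$. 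Because every accepted entity has $x_0=1$ while $e_0(x_0)=0$, all terms $\phi_{M,e_0}(S\cup\{x_0\})$ vanish and the score reduces to $-\sum_S c_{|S|}\phi_{M,e_0}(S)$, which at a corner $\textbf{p}^C$ evaluates to $-\sum_{\{u,v\}\in E}p_up_vI_{u,v}-\varepsilon\sum_{i=0}^{\ell}\binom{\ell}{i}c_{n-i}$ with $\ell=|C|$. Pinning $p_w=\varepsilon$ with $\varepsilon=c_{n-1}/\sum_{i=0}^{k}\binom{k}{i}c_{n-i}$ and setting $\aBound=-c_{n-1}$ makes any uncovered edge cost at least $I_{u,v}\geq c_{n-1}$, pushing the score strictly below $\aBound$, while covers of size $\ell>k$ also fall below $\aBound$ because the size factor is strictly increasing in $\ell$. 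This calibration is the substance of the theorem and is absent from your proposal.

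Your stated obstacle --- that $\bigwedge_{\{u,v\}\in E}(x_u\vee x_v)$ has no small decision tree --- points at the wrong difficulty and suggests you would head down an unworkable path. The reduction never asks $M$ to decide the cover condition; the global check is performed by the Shapley aggregation itself, through penalty terms that activate precisely when both endpoints of some edge lie outside $C$. Since $M$ accepts only $m+1$ entities, it trivially has a polynomial-size decision tree (one accepting branch per accepted entity), so no edge-gadget machinery is needed. A further small error: at a corner of $\anInterval$ the conditional expectations $\phi_{M,e}(S)$ do \emph{not} collapse to a single evaluation of $M$, because the auxiliary feature $w$ must be pinned to a non-extreme value $\varepsilon\in(0,1)$ for the calibration to work; the expectations remain genuine polynomials in $\varepsilon$, and it is exactly this degree of freedom that lets one shrink the size factor relative to the edge penalties.
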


\begin{proof}[Sketch of the proof]
We reduce from the well-known \NPcomplete problem \problemVertexCover: given a graph $G=(V,E)$ and $k\geq 1$, decide whether there is a vertex cover\footnote{Recall a vertex cover of $G=(V,E)$ is a subset of the nodes $C\subseteq V$ such that for each edge $\{u,v\}\in E$, either $u\in C$ or $v\in C$. } of size at most $k$. 

The hardness proof relies on two observations. Firstly, by using $|V|$ features and choosing the hyperrectangle $\anInterval = [0,1]^{|V|}$ as the uncertainty region, there is a natural bijection $\textbf{p}(C) = \textbf{p}^C \in \anInterval$ between the subsets $C \subseteq V$ and the vertices of $\anInterval$ ($v\in C$ iff $p_v=0$). Secondly, by properly picking the entities accepted by model $M$, the SHAP polynomial will be
\begin{align*}
Shap_{M,e,x}(\textbf{p}^C) =  - \sum_{\{u,v\} \in E} p_{u}p_{v} I_{u,v}
    - T_{n,\ell} 
\end{align*}
where $\ell$ is the size of the subset $C$ and the term $T_{n,\ell}$ grows as $\ell$ grows. The term $I_{u,v}$ is positive and works as a \emph{penalization factor} which is ``activated'' if $p_u = p_v = 1$, i.e., when the edge $uv$ is uncovered. Furthermore, by adding an extra feature $w$ to the model (and consequently, another probability $p_w$) we can make this penalization factor arbitrarily big in relation to the \textit{size factor} $T_{n, \ell}$. 

We choose the bound $\aBound$ to be $-T_{n,k}$. If $C$ is a vertex cover of size $\ell$, then each term $p_{u}p_{v} I_{u,v}$ equals 0 and hence $Shap_{M,e,x}(\textbf{p}^C) = -T_{n,\ell}$. On the other hand, if $C$ is not a vertex cover, then some $p_u p_v I_{u,v}$ is non-zero, and by defining an adequate interval for $p_w$ we can ensure that $- p_v p_v I_{u,v} < \aBound$. Hence, the only way to obtain $Shap_{M,e_0,x_{0}}(\textbf{p}^C)\geq \aBound$ is to pick $C$ to be, in the first place, a vertex cover, and secondly, one of size $\ell\leq k$.  
\end{proof}

Both Corollary \ref{coro:np-region-max-shap}
and Theorem \ref{teo:npcomplete_decisiontrees} also apply to \problemMinShap{} (see Appendix~\ref{sec:app-region-min-shap} for details).

\subsection{Related problems}
\label{sec:complexity-related}

In this section we show some results related to the problems proposed in Section \ref{sec:framework}. As in the case of \problemMaxShap{} they turn out to be \NPcomplete{}, even when restricting the input classifiers to be decision trees.



Again, as a consequence of Proposition~\ref{prop:general-maximizesInVertices} we obtain the \NP{} membership for \problemAmbiguity{} and \problemIrrelevancy{}, and the \coNP{} membership for \problemDominance{}.

\begin{corollary}\label{coro:realed_NP}
    Let $\cal F$ be a class of classifiers for which computing the SHAP score for given product distributions can be done in polynomial time. Then the problems \problemAmbiguity{} and \problemIrrelevancy{} are in \NP for the class $\mathcal{F}$, while the \problemDominance{} is in \coNP{}\footnote{The proof for \problemDominance{} follows by observing that $\textit{diff}(\textbf{p}) = Shap_{M,e,x}(\textbf{p}) - Shap_{M,e,y}(\textbf{p})$ is again a multilinear polynomial.}.
\end{corollary}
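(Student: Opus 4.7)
The plan is to leverage Proposition \ref{prop:general-maximizesInVertices}, which guarantees that a multilinear polynomial on a hyperrectangle attains both its maximum and minimum at some vertex, together with the earlier observation that $\shap_{M,e,x}$ is a multilinear polynomial in $(p_x)_{x\in X}$. A vertex of $\anInterval = \bigtimes_{x\in X}[a_x,b_x]$ is specified by choosing one endpoint per feature, so it has a description of polynomial size. Moreover, a vertex is itself a product distribution, and therefore by the hypothesis on $\mathcal{F}$ we can compute $\shap_{M,e,x}(\textbf{v})$ at a vertex $\textbf{v}$ in polynomial time. This gives an efficient verifier for vertex-based certificates.

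For \problemAmbiguity{}, I would observe that the existence of $\textbf{p}_1, \textbf{p}_2 \in \anInterval$ with $\shap_{M,e,x}(\textbf{p}_1) > 0 > \shap_{M,e,x}(\textbf{p}_2)$ is equivalent to $\max_{\anInterval}\shap_{M,e,x} > 0$ and $\min_{\anInterval}\shap_{M,e,x} < 0$, both of which are realised at vertices by Proposition \ref{prop:general-maximizesInVertices}. The \NP{} certificate is then a pair of vertices with the required signs, verified by two SHAP evaluations. For \problemDominance{}, as already noted in the corollary's footnote, $\mathit{diff}(\textbf{p}) := \shap_{M,e,x}(\textbf{p}) - \shap_{M,e,y}(\textbf{p})$ is multilinear, and the complement of dominance is the existence of a $\textbf{p} \in \anInterval$ with $\mathit{diff}(\textbf{p}) < 0$. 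Again by Proposition \ref{prop:general-maximizesInVertices}, this is witnessed by a single vertex, which can be evaluated via two SHAP computations. Hence the complement lies in \NP{} and \problemDominance{} lies in \coNP{}.

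The only step requiring real care is \problemIrrelevancy{}, because a zero of the SHAP polynomial need not lie at a vertex of $\anInterval$. The key observation is that $\shap_{M,e,x}$ is continuous and $\anInterval$ is path-connected, so by the intermediate value theorem there exists $\textbf{p} \in \anInterval$ with $\shap_{M,e,x}(\textbf{p}) = 0$ if and only if the polynomial takes both a non-positive and a non-negative value on $\anInterval$, equivalently $\min_{\anInterval}\shap_{M,e,x} \leq 0 \leq \max_{\anInterval}\shap_{M,e,x}$. By Proposition \ref{prop:general-maximizesInVertices} these bounds are realised at vertices, so the \NP{} certificate consists of two vertices $\textbf{v}_1, \textbf{v}_2$ satisfying $\shap_{M,e,x}(\textbf{v}_1) \leq 0 \leq \shap_{M,e,x}(\textbf{v}_2)$, checked by two polynomial-time SHAP evaluations. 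The main conceptual obstacle is precisely this detour through the IVT: it allows one to certify the existence of a zero without exhibiting it explicitly, which is essential since such a zero may lie strictly in the interior of $\anInterval$ and thus admit no polynomially bounded rational description a priori.
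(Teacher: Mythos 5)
Your proof is correct and follows essentially the same route as the paper, which derives all three memberships from Proposition~\ref{prop:general-maximizesInVertices} via vertex certificates evaluated by polynomial-time SHAP computations (with the \problemDominance{} case handled exactly as in the corollary's footnote). Your intermediate-value-theorem argument for \problemIrrelevancy{} --- certifying a zero by two vertices of opposite sign rather than exhibiting the zero itself --- is the step the paper leaves implicit, and you have supplied it correctly.
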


The hardness for these problems follows under the same conditions as Theorem~\ref{teo:npcomplete_decisiontrees}.

\begin{theorem}\label{teo:related_hard}
    The problems \problemAmbiguity{} and \problemIrrelevancy{} are \NPhard for decision trees, while \problemDominance{} is \coNPhard.
\end{theorem}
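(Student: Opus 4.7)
My plan is to reduce from \problemVertexCover{} (for the \NP-hard cases) or from VC to the complement of \problemDominance{} (for the \coNP-hard case), adapting the decision-tree construction from Theorem~\ref{teo:npcomplete_decisiontrees}. Recall that this construction produces, from an instance $(G=(V,E),k)$, a decision tree $M$, entity $e_0$, feature $x_0$ and hyperrectangle $\anInterval$ such that at every vertex $\textbf{p}^C$ of $\anInterval$,
\[
\shap_{M,e_0,x_0}(\textbf{p}^C) \;=\; -\sum_{\{u,v\}\in E} p_u p_v I_{u,v} \;-\; T_{n,|C|},
\]
where the penalization factors $I_{u,v}$ can be made arbitrarily large through an auxiliary feature $w$. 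The flexibility I will exploit is that by adding further auxiliary features and tuning their probability intervals, I can additively shift this polynomial by a constant and scale the $I_{u,v}$ independently.

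For \problemAmbiguity{} and \problemIrrelevancy{}, I would additively shift the above polynomial by $T_{n,k}+\varepsilon$ for a small rational $\varepsilon>0$, producing a polynomial $P'$ satisfying $P'(\textbf{p}^C)=-\sum_{\{u,v\}\in E}p_up_vI_{u,v}-T_{n,|C|}+T_{n,k}+\varepsilon$ at vertices. Making the $I_{u,v}$ sufficiently large (through $p_w$), I ensure that if $C$ is a vertex cover of size at most $k$ then $P'(\textbf{p}^C)\geq\varepsilon>0$, and otherwise $P'(\textbf{p}^C)<0$. By Proposition~\ref{prop:general-maximizesInVertices} the extrema of $P'$ over $\anInterval$ are attained at vertices, so $P'$ takes a strictly positive value on $\anInterval$ iff a VC of size $\leq k$ exists. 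Moreover, the corner $\textbf{p}^V$ (all $p_v=0$) always yields a strictly negative value because $T_{n,|V|}$ dominates $T_{n,k}+\varepsilon$. Hence: (i) both signs appear in $\anInterval$ iff VC exists, giving \NP-hardness of \problemAmbiguity{}; (ii) for \problemIrrelevancy{}, if VC exists then continuity of $P'$ on the connected region $\anInterval$ forces it to vanish somewhere between the positive and negative vertices, while if no VC exists then $P'<0$ at every vertex and hence everywhere on $\anInterval$ (again by Proposition~\ref{prop:general-maximizesInVertices}), so $P'$ never equals zero.

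For \problemDominance{} I would reduce VC to its complement, i.e., to non-dominance ($\exists \textbf{p}:\shap_{M,e,x}(\textbf{p})<\shap_{M,e,y}(\textbf{p})$). The idea is to design a decision tree with two distinguished features $x,y$ such that $\shap_{M,e,x}(\textbf{p})-\shap_{M,e,y}(\textbf{p})=-P'(\textbf{p})$ for the polynomial $P'$ constructed above: then non-dominance holds iff $P'$ takes a strictly positive value somewhere, iff a VC of size $\leq k$ exists, yielding \coNP-hardness. Concretely, I would achieve this by placing two mirror-image copies of the Theorem~\ref{teo:npcomplete_decisiontrees} gadget along the $x$- and $y$-branches at the root so that the contributions of all other features cancel in the difference $\shap_x-\shap_y$, leaving only the vertex-cover-encoding part. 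The main obstacle I foresee is precisely this gadget-level bookkeeping: realising the additive constant shift and the mirror-image construction via decision trees while preserving the multilinear identity inherited from Theorem~\ref{teo:npcomplete_decisiontrees} requires careful handling of the auxiliary features and of the entity $e$, but no new conceptual ingredient beyond that proof should be needed.
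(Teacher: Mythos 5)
Your treatment of \problemAmbiguity{} and \problemIrrelevancy{} follows essentially the paper's route: reduce from \problemVertexCover{} and shift the polynomial of Theorem~\ref{teo:npcomplete_decisiontrees} by (roughly) the constant $T_{n,k}$, so that vertex covers of size at most $k$ land at or above zero while everything else lands strictly below. Be aware that the ``additive constant shift via auxiliary features'' you defer as bookkeeping is in fact the bulk of the paper's proof: the shift is realized by accepting an extra family of entities that ignore the features in $V$ (so their contribution marginalizes to a constant), which forces the addition of $2(n-k)$ auxiliary features $Y\cup Z$ and a combinatorial inequality comparing $\sum_i\binom{n}{i}c_{n-k+i}$ with $\sum_i\binom{n-k+\ell}{i}c_{n-\ell+i}$. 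Your outline is nonetheless sound; your intermediate-value argument locating a zero of $P'$ in the interior is a valid (and slightly slicker) alternative to the paper's choice of shifting by exactly $T_{n,k}$ so that the zero sits at a vertex, and your strict-sign version of the ambiguity reduction is fine (the paper even tolerates value $0$ at some vertices in the no-case, since ambiguity demands strict signs on both sides).

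The genuine gap is in \problemDominance{}. Your plan hinges on building a single decision tree in which $\shap_{M,e,x}(\textbf{p})-\shap_{M,e,y}(\textbf{p})=-P'(\textbf{p})$ via ``mirror-image copies'' of the gadget on the $x$- and $y$-branches, with ``contributions of all other features cancelling in the difference.'' That cancellation is not justified and does not follow from any branch-wise decomposition: the SHAP score averages over all coalitions $S$, and the two sub-gadgets interact through the shared features in both $\phi_{M,e}(S\cup\{x\})-\phi_{M,e}(S)$ and $\phi_{M,e}(S\cup\{y\})-\phi_{M,e}(S)$, so there is no reason the difference collapses to the vertex-cover-encoding part. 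The paper avoids this entirely with a much simpler device: it reduces \problemAmbiguity{} to the complement of \problemDominance{} by adjoining a \emph{dummy} feature $w$ that the model ignores. One checks that $\phi_{M',e'}(S\cup\{w\})=\phi_{M',e'}(S)$, hence $\shap_{M',e',w}\equiv 0$, and, using the identity $c_{i+1,n+1}+c_{i,n+1}=c_{i,n}$, that $\shap_{M',e',x}=\shap_{M,e,x}$; then ``$w$ does not dominate $x$'' is exactly ``$\shap_{M,e,x}>0$ somewhere,'' which is equivalent to ambiguity because the no-instances produced by the ambiguity reduction satisfy $\shap_{M,e,x}\le 0$ on all of $\anInterval$. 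You should either adopt this dummy-feature argument or supply an explicit model together with a full computation establishing your claimed identity for the difference of the two SHAP polynomials.
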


\begin{proof}[Sketch of the proof]
    The proof follows the same techniques as the proof for Theorem~\ref{teo:npcomplete_decisiontrees}, through a reduction from \problemVertexCover{}. For the case of \problemIrrelevancy{} we devise a model $M$ such that
    \begin{align}\label{eq:proof_irrelevancy}
        Shap_{M,e,x}(\textbf{p}^C) = T_{n,k} - \sum_{\{u, v\} \in E} p_u p_v I_{u,v} - T_{n,\ell}
    \end{align}
    where $\ell$ is the size of $C$, $T_{n,\ell}$ corresponds to the \textit{size factor}, and $I_{u,v}$ is the \emph{penalization factor} for uncovered edges. Observe that the first term $T_{n,k}$ does not depend on the set $C$, and consequently $Shap_{M,e,x}(\textbf{p}^C) = 0$ if $C$ is a vertex cover of size $k$. The construction is a bit more complex, and we have to add $2(n-k)$ extra features to those considered in the construction of Theorem~\ref{teo:npcomplete_decisiontrees}.

    The proof for \problemAmbiguity{} is obtained by a slight modification of Equation~\ref{eq:proof_irrelevancy} in order to make the SHAP score positive (instead of 0) if $C$ is a vertex cover of size $k$.

    Finally, for the hardness of \problemDominance{} we prove that \problemAmbiguity{}$\leq_p \overline{\mbox{\problemDominance{}}}$\footnote{If $\Pi$ is a decision problem, we denote its complement as $\overline{\Pi}$.}. This reduction is achieved by adding a ``dummy feature'' $w$ that does not affect the prediction of the model. We prove that its SHAP polynomial is constant and equal to 0, and consequently deciding the ambiguity for a feature $x$ is equivalent to deciding the dominance relation between $x$ and $w$.
\end{proof}

\vspace{-3mm}
\section{Experimentation}\label{sec:experiments}

As a case study, we are going to use the California Housing dataset \cite{California}, a comprehensive collection of housing data within the state of California containing 20,640 samples and eight features\footnote{The code developed for the experimentation can be found at \url{https://git.exactas.uba.ar/scifuentes/fuzzyshapley}.}. Our choice of dataset relies mainly on the fact that this dataset has already been considered in the context of SHAP score computation \cite{bertossi2023efficient} and its size and number of features allow us to compute most of the proposed parameters (as the SHAP polynomial itself, for each feature) in a reasonable time. Nonetheless, we recall that the proposed framework can be applied to any dataset, as long as there is some uncertainty on the real distribution of feature values 
and uncertainty regions can be estimated for it (e.g., via sampling the  data as we do here).

Note that, while there is no reason to expect that the probabilities of each feature are independent of each other in the California Housing dataset, we make this assumption in our framework (which is a common one in the literature \cite{vstrumbelj2014explaining,shrikumar-et-al:shrikumar2017learning}).

\subsection{Objectives}

The purpose of this experiment is to use a real dataset to simulate a situation where we have uncertainty over the proportions of each feature in the dataset. We want to derive suitable hyperrectangles representing the distribution uncertainty where our extended concepts of SHAP scores apply, and compare these scores against the usual, point-like SHAP score, in order to reveal cases where these new hypervolume scores are more informative than the traditional scheme. We expect our proposal to be able to detect features whose ranking is vulnerable to small distribution shifts, and aim to study how such sensitivity starts to vanish as we reduce the uncertainty over the distribution (i.e., as the hyperrectangles get smaller).

\subsection{Methodology}

\paragraph{Preparing dataset}
Our framework is defined for binary features, and therefore we need to binarize the features from the California Housing dataset. Of the eight features, seven are numerical and one is categorical. To binarize each of the numerical ones, we take the average value across all entities and use it as a threshold. We also binarize the target \texttt{median\_house\_value} using the same strategy. The categorical one is \texttt{ocean\_proximity}, which describes how close each entity is to the ocean, with the categories being \{\texttt{INLAND}, \texttt{<1H OCEAN}, \texttt{NEAR OCEAN}, \texttt{NEAR BAY}, \texttt{ISLAND}\}. The \texttt{INLAND} one represents the farthest distance away from the ocean, and we binarize that category to 0, while the other ones are mapped to 1.
Finally, we remove the rows with \texttt{NaN} values.

\paragraph{The model $M$}
We take 70\% of the data for training, and keep 30\% for testing. As a model we employ the \textit{sklearn} \texttt{DecisionTreeClassifier}. For regularization we considered both bounding the depth of the tree by 5 and bounding the minimum samples to split a node by 100. The obtained results were similar for both mechanisms, and therefore we only show here the results for the model regularized by restricting the number of samples required to split a node.
The trained model has 80.0\% accuracy and 76.5\% precision. 

\paragraph{Creating the hyperrectangle}
We assume independence between the different distributions of the features, and ignorance of their true probabilities. Therefore, to obtain an estimation of the probability $p_x$ that a feature $x$ has value $1$ for a random entity, we will sample a subset of the available data and compute an empirical average. We vary the number of samples taken in order to simulate situations with different uncertainty. 

Let $N$ be the number of samples. Given $0 < p < 1$, we sample $\lceil pN \rceil$ entities $5$ times, and for each of these times we compute an empirical average $p_x^j$ for each feature $x$. We then define the estimation for $p_x$ as the median of $\{p_x^j\}_{1 \leq j \leq 5}$ and compute a deviation $\sigma_x$ by taking the deviation over the set $\{p_x^j\}_{1 \leq j \leq 5}$. Finally, the hyperrectangle is defined as $\bigtimes_x [p_x - \sigma_x, p_x + \sigma_x]$.
We experiment with different values for $p$ from $10^{-3}$ to $\frac{1}{2}$.

\paragraph{Comparing SHAP scores}
We pick 20 different entities uniformly at random and compute the SHAP score for each of these entities and for each feature at all the vertices of the different hyperrectangles. Instead of using the polynomial-time algorithm for the SHAP value computation over decision trees, we actually compute an algebraic expression for the SHAP polynomial for each pair of entity and feature, simplify it, and then use it to compute any desired SHAP score given any product distribution.

\vspace{-2mm}
\subsection{Results}

We recall that the \textit{SHAP interval} of feature $x$ for entity $e$ over the hyperrectangle $\anInterval$ is \  $[\min_{\textbf{p} \in \anInterval} Shap_{M,e,x}(\textbf{p}), \max_{\textbf{p} \in \anInterval}Shap_{M,e,x}(\textbf{p})]$. 

In Figure~\ref{fig:shap_intervals} we plot the SHAP intervals of all features for one of the entities we used and two sampling percentages. By observing the intervals it is clear that, even in the presence of the uncertainty of the real distribution, as long as it belongs to the hyperrectangle $\anInterval$ the features \texttt{median\_income} and \texttt{ocean\_proximity} will be the top 2 in the ranking defined by the SHAP score, for both sampling percentages considered. However, when the sampling percentage is small ($p=0.4\%$) the SHAP intervals of both features intersect, and therefore it could be the case that their relative ranking actually depends on the chosen distribution inside $\anInterval$.

To decide whether this happens, one should find two points $\textbf{p}_1,\textbf{p}_2 \in \anInterval$ such that $Shap_{M,e,\texttt{med\_inc}}(\textbf{p}_1) < Shap_{M,e,\texttt{ocean\_prox}}(\textbf{p}_1)$ and $Shap_{M,e,\texttt{med\_inc}}(\textbf{p}_2) > Shap_{M,e,\texttt{ocean\_prox}}(\textbf{p}_2)$ (i.e., solve the \problemDominance{} problem). This can be done by observing that $\textit{diff}(\textbf{p}) = Shap_{M,e,\texttt{med\_inc}}(\textbf{p}) - Shap_{M,e,\texttt{ocean\_prox}}(\textbf{p})$ is a multilinear polynomial, and therefore its maximum and minimum are attained at the vertices of $\anInterval$. By computing $\textit{diff}(\textbf{p})$ on all these points we observed that in 4 of the 256 vertices \texttt{median\_income} has a bigger SHAP score than \texttt{ocean\_proximity}, and consequently their relative ranking depends on the chosen underlying distribution.

\begin{figure}
    \centering
    \includegraphics[scale=0.3]{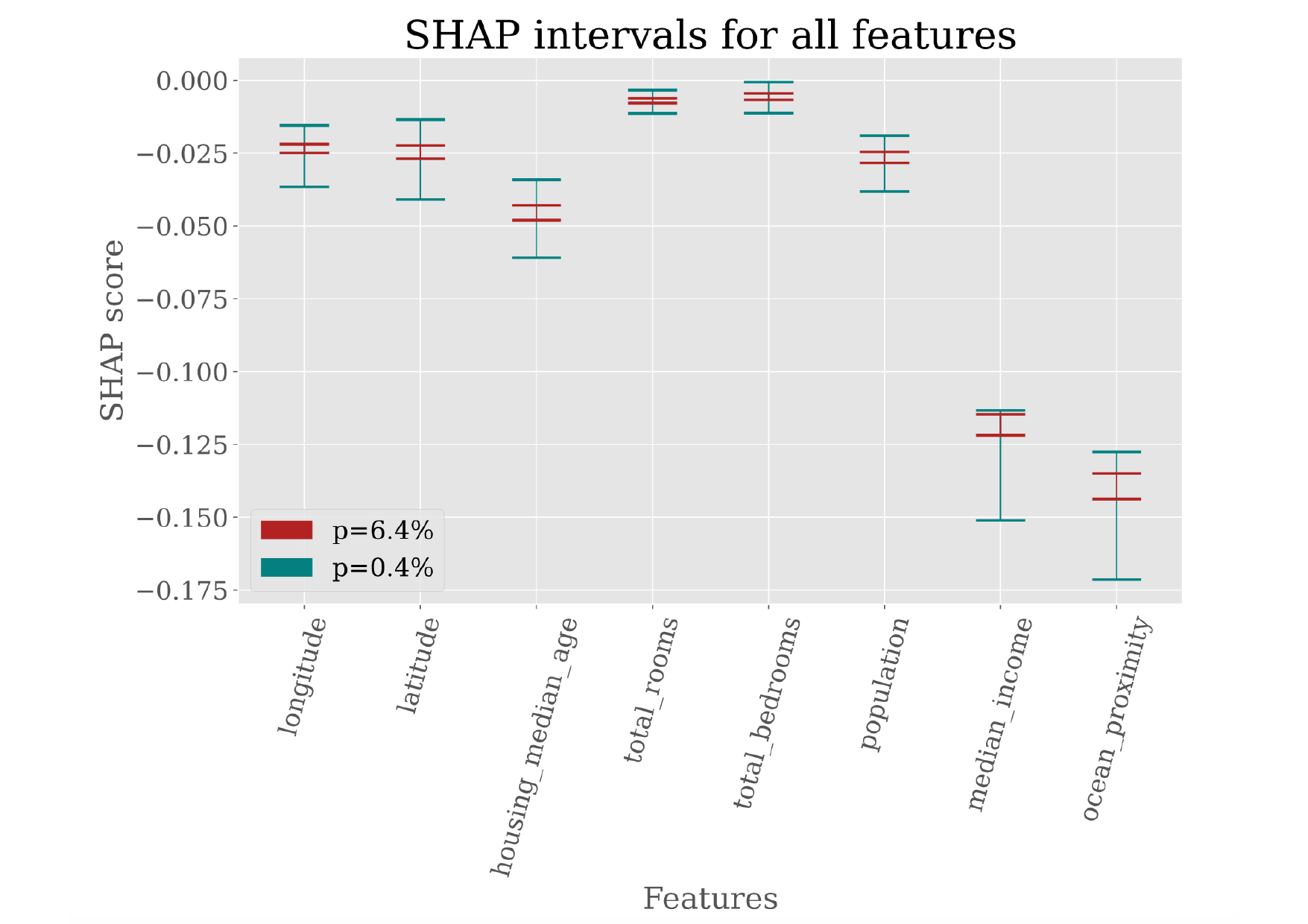}
    \caption{SHAP intervals for all features and a fixed entity, over two different sampling percentages. When $p=0.4\%$ it is clear that, according to the SHAP score, the features \texttt{median\_income} and \texttt{ocean\_proximity} are the two most relevant, but there is an uncertainty on which one of the two is the most important. When the sampling percentage increases to $p=6.4\%$ the SHAP intervals become disjoint, and we can be certain that \texttt{ocean\_proximity} is the most relevant feature. Observe that the same kind of uncertainty exists regarding the third ranked feature.}
    \label{fig:shap_intervals}
    \vspace{0.5cm}
\end{figure}

We can also observe in Figure~\ref{fig:shap_intervals} that something similar happens as well for feature \texttt{housing\_median\_age}. When $p=0.4\%$ its SHAP interval intersects with the intervals of the three features \texttt{longitude}, \texttt{latitude} and \texttt{population}. Nonetheless, by inspecting the difference of the scores at the vertices of $\anInterval$ it can be seen that there is no point in which \texttt{housing\_median\_age} is ranked below the other features (i.e., \texttt{housing\_median\_age} dominates all three features). Meanwhile, if we compare two of the other features such as \texttt{longitude} and \texttt{latitude}, we observe that they have a relevant intersection even when $p=6.4\%$, and in 179 out of the 256 vertices \texttt{latitude} is ranked below \texttt{longitude}.

In Figure~\ref{fig:evolution_regions} we can see how the SHAP intervals shrink as the sampling percentage increases. For all sampling percentages above 6.4\% we can say with certainty that \texttt{ocean\_proximity} is ranked above \texttt{median\_income}. This behavior is natural since the size of the hyperrectangles is getting smaller, because the deviation of the empirical samples $p_x^i$ tends to get smaller as the sampling percentage increases.

\begin{figure}[h!]
    \centering
    \includegraphics[scale=0.3]{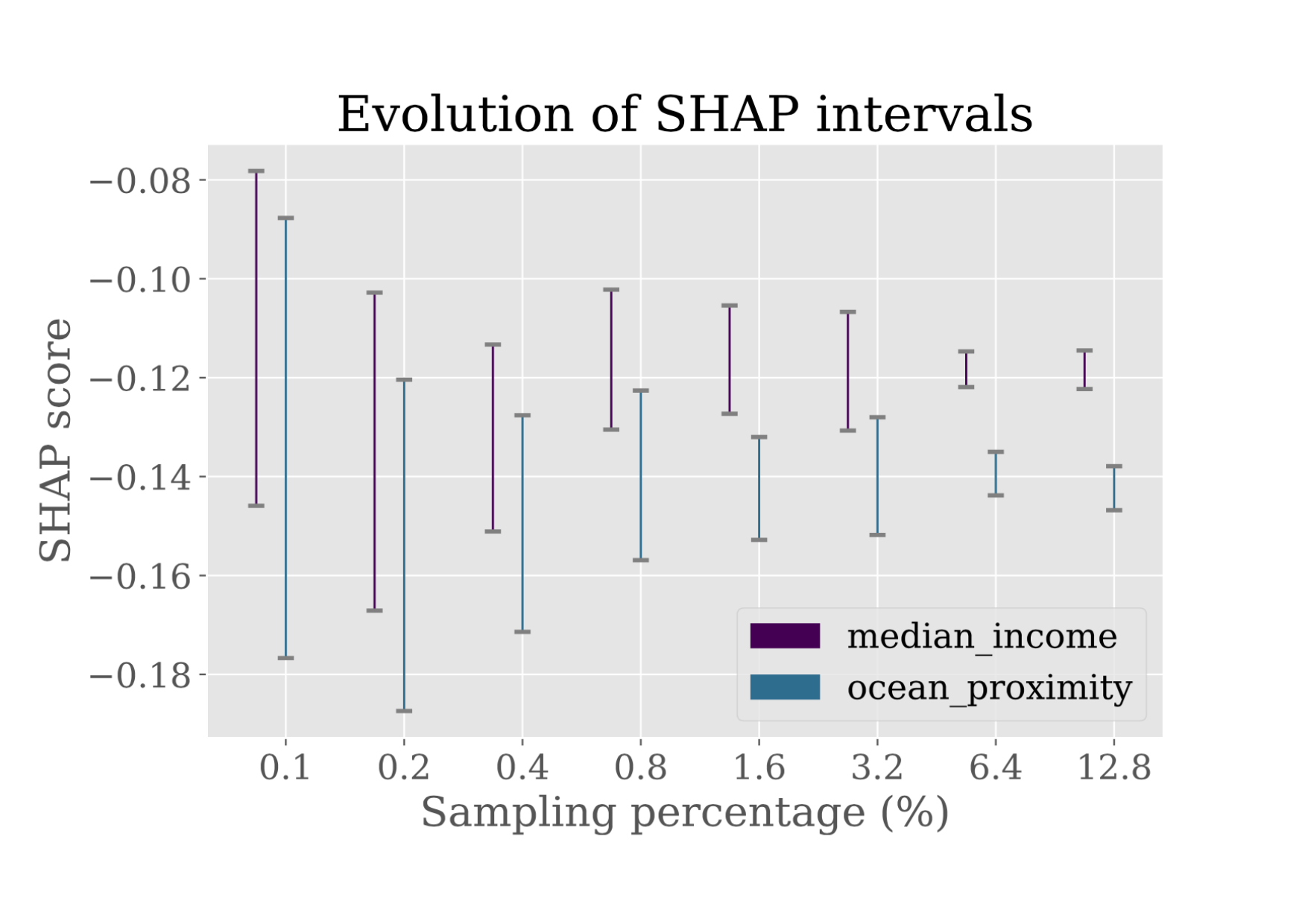}
    \caption{Evolution of the SHAP intervals for features \texttt{median\_income} and \texttt{ocean\_proximity} considering the same entity as in Figure \ref{fig:shap_intervals} and the different sampling percentages.}
    \label{fig:evolution_regions}
    \vspace{0.5cm}
\end{figure}

Finally, in Figure~\ref{fig:rankingSensitivity} we plot the number of entities whose ranking depends on the chosen distribution, for each sampling percentage, and restricting ourselves to some subset of the ranking. We found out that for 10 of the 20 entities the complete ranking defined by the SHAP score depends on the chosen distribution even when the sample percentage is as big as $p=12.8\%$ (recall that due to the way we built our hyperrectangles, we are sampling $5p$ times the dataset, and therefore if $p > 10\%$ we might be inspecting more than half of the data points). If we are only concerned with the top three ranked features, we can observe that even when $p=6.4\%$ there are still 4 entities for which the top 3 ranking is sensitive to the selected distribution. 

\begin{figure}
    \begin{center}
   \includegraphics[scale=0.28]{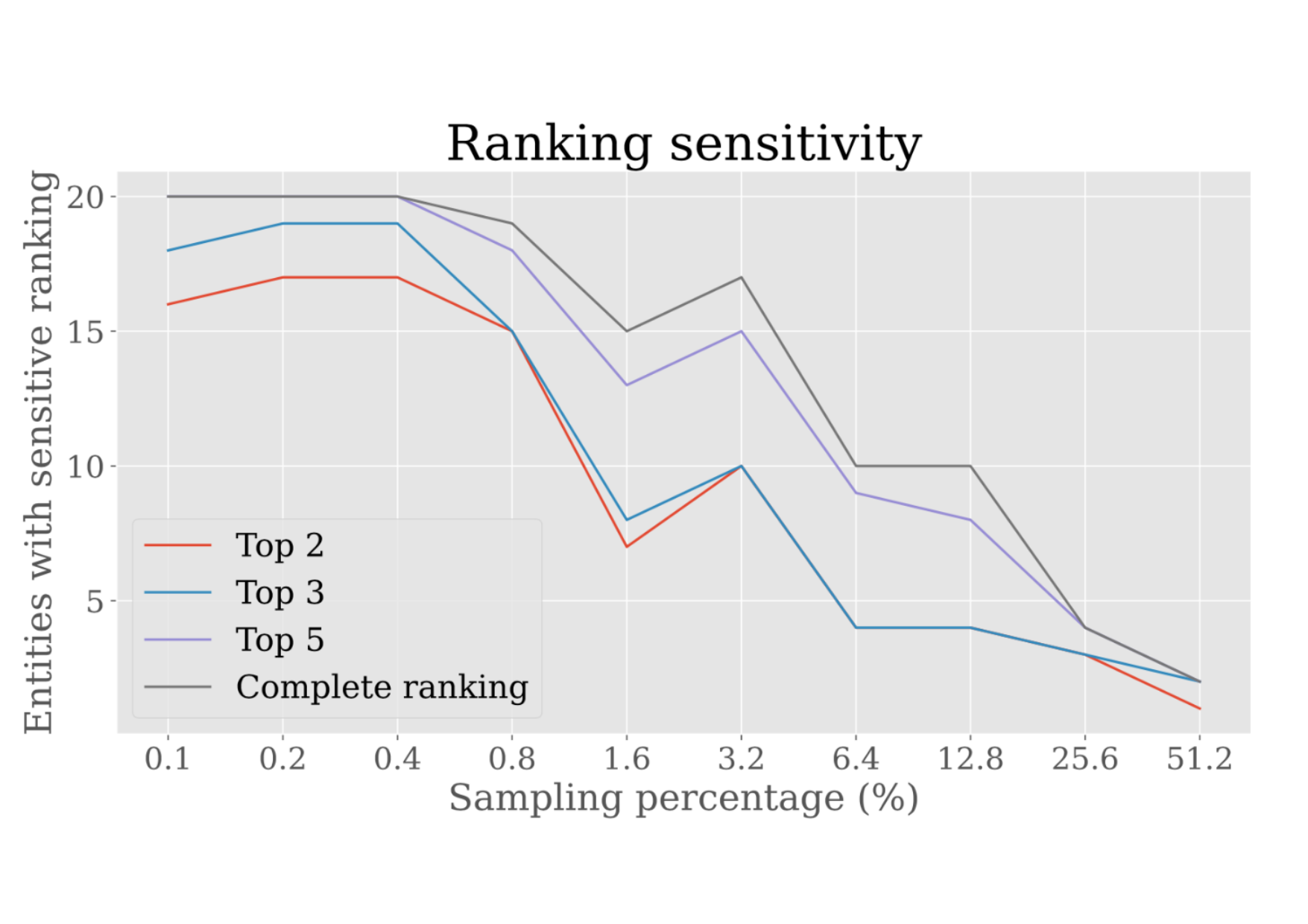}
    \end{center}
    \caption{Number of entities whose feature ranking may vary depending on the chosen distribution inside the uncertainty hyperrectangle, for each sampling percentage. The \texttt{Top $k$} line indicates whether there was a change in the ranking of the top $k$ features, ignoring changes in the rest of the ranking. As expected, sensitivity of the ranking is more common when the sampling percentage is smaller.}
    \label{fig:rankingSensitivity}
    \vspace{0.5cm}
\end{figure}

\vspace{-3mm}
\section{Conclusions}\label{sec:conclusions}

We have analyzed how SHAP scores for classification problems depend on the underlying distribution over the entity population when the distribution varies over a given uncertainty region. As a first and important step, we focused on product distributions and hyperrectangles as uncertainty regions, and obtained algorithmic and complexity results for fundamental problems related to how SHAP scores vary under these conditions. As a proof of concept, we showed through experimentation that considering uncertainty regions has an impact on feature rankings for a non-negligible percentage of the entities, and that the solutions to our proposed problems can provide insight on the relative rankings even in the presence of uncertainty.

\paragraph{Feature Independence.} We stress that from a complexity point of view, it is natural to start with product distributions. As shown in \cite{vandenbroeck-et-al:guy22}, the computation of SHAP scores becomes intractable for trivial classifiers as soon as we consider simple non-product distributions such as naive Bayes distributions. As a consequence, computing maxima of SHAP polynomials is trivially intractable in this setting. As we discussed at the beginning of Section~\ref{sec:complexity-region-max-shap}, we focused on the cases for which computing SHAP scores is tractable, since this gives us the possibility to also obtain tractability for computing maxima of SHAP polynomials. We considered the prominent case of decomposable and deterministic Boolean circuits under product distributions. This case has been shown to be tractable in \cite{vandenbroeck-et-al:guy22} and \cite{arenas-et-al:jmlr23} (in \cite{vandenbroeck-et-al:guy22} product distributions are referred to as fully-factorized distributions). 

For our distributional shift analysis, considering more general distributions on the entity population would still require specifying a class of them and their regions of variation. A natural next step in this direction, that would build on our work, consists in imposing additional {\em domain knowledge} on the product distribution, leading, in particular, to certain dependencies among features. For example, a constraint specifying that ``house lots located at the seaside have a price about \$2M". More generally, a conjunction $\varphi$ of such constraints is associated to an event $E^\varphi \subseteq {\tt ent}(X)$ containing all entities that satisfy it. Conditioning on this event leads to a new distribution $\prob^\prime$ defined by $\prob^\prime(A) := \prob(A|E^\varphi)$, for $A \subseteq {\tt ent}(X)$. 
The shift analysis would be done on the new entity space $\langle {\tt ent}(X), \prob^\prime\rangle$, where (in general) features will not be fully independent anymore. This would be done, without having to start from scratch with $\prob^\prime$, by taking advantage of our previous analysis of the original product distribution $\prob$.

It is worth mentioning that imposing and exploiting {\em domain semantics} when defining and computing
explanation scores is interesting in its own right (see \cite{bertossi2023declarative} in relation to the RESP score). The problem of using domain knowledge in Explainable AI has been scarcely investigated in the literature.

\paragraph{Explanatory Robustness.} 
It would be interesting to know how a local perturbation of a given distribution in the uncertainty region affects the SHAP scores, or their rankings. This would be a proper {\em robustness analysis} with respect to the distribution (as opposed to how SHAP scores vary in a region). Of course, this is a different problem from the most common one of robustness with respect to the perturbation of an input entity (see e.g., \cite{robust,marques-silva}). 

\vspace{2mm}
There are several other problems left open by our work.  
The inclusion of non-binary features and labels is a natural next step. 
It would also be interesting to analyze others proposed scores (e.g., LIME \cite{ribeiro2016should}, RESP \cite{bertossi-et-al:deem}, Kernel-SHAP \cite{lundberg2017unified}) in the setting of distributional uncertainty.



\section*{Acknowledgments}
This research has been supported by an STIC-AmSud program, project AMSUD210006. We appreciate the code made
available by Jorge E. Le\'on (UAI, Chile) for SHAP computation with the same dataset used in our work. \ Bertossi
has been supported by NSERC-DG 2023-04650, the Millennium Institute for Foundational Research on Data (IMFD,
Chile), and  CENIA (Basal ANID FB210017, Chile). \ Romero is funded by the National Center for Artificial Intelligence CENIA FB210017, Basal ANID. Pardal is
funded by DFG VI 1045-1/1. \ Abriola, Cifuentes
and Pardal are funded by FONCyT, ANPCyT and CONICET, in the context of the projects PICT 01481 and PIP
11220200101408CO. Abriola is additionally funded by the project PIBAA 28720210100188CO. \ Martinez is partially funded under the Spanish project PID2022-139835NB-C21 funded by MCIN/AEI/10.13039/501100011033, PIE 20235AT010 and iTrust (PCI2022-135010-2).
Pardal was supported by the DFG grant VI 1045-1/1. 

\appendix

\bibliography{main}

\begin{thebibliography}{27}
\providecommand{\natexlab}[1]{#1}
\providecommand{\url}[1]{\texttt{#1}}
\expandafter\ifx\csname urlstyle\endcsname\relax
  \providecommand{\doi}[1]{doi: #1}\else
  \providecommand{\doi}{doi: \begingroup \urlstyle{rm}\Url}\fi

\bibitem[Alvarez-Melis and Jaakkola(2018)]{robust}
D.~Alvarez-Melis and T.~Jaakkola.
\newblock {On the Robustness of Interpretability Methods}.
\newblock \emph{ArXiv preprint: 1806.08049}, 2018.

\bibitem[Arenas et~al.(2003)Arenas, Bertossi, Chomicki, He, Raghavan, and
  Spinrad]{arenas-et-al:aggr}
M.~Arenas, L.~Bertossi, J.~Chomicki, X.~He, V.~Raghavan, and J.~Spinrad.
\newblock {Scalar Aggregation in Inconsistent Databases}.
\newblock \emph{Theoretical Computer Science}, 296\penalty0 (3):\penalty0
  405--434, 2003.

\bibitem[Arenas et~al.(2023)Arenas, Barcelo, Bertossi, and
  Monet]{arenas-et-al:jmlr23}
M.~Arenas, P.~Barcelo, L.~Bertossi, and M.~Monet.
\newblock {On the Complexity of SHAP-Score- Based Explanations: Tractability
  via Knowledge Compilation and Non-Approximability Results}.
\newblock \emph{J. Machine Learning Research}, 24\penalty0 (63):\penalty0
  1--58, 2023.

\bibitem[Barcel\'o et~al.(2020)Barcel\'o, Monet, P\'erez, and
  Subercaseaux]{barcelo-et-al:neurips20}
P.~Barcel\'o, M.~Monet, J.~P\'erez, and B.~Subercaseaux.
\newblock {Model Interpretability through the Lens of Computational
  Complexity}.
\newblock \emph{{Advances in Neural Information Processing Systems}}, 2020.

\bibitem[Bertossi(2023)]{bertossi2023declarative}
L.~Bertossi.
\newblock Declarative approaches to counterfactual explanations for
  classification.
\newblock \emph{Theory and Practice of Logic Programming}, 23\penalty0
  (3):\penalty0 559--593, 2023.

\bibitem[Bertossi and Le{\'o}n(2023)]{bertossi2023efficient}
L.~Bertossi and J.~E. Le{\'o}n.
\newblock {Efficient Computation of Shap Explanation Scores for Neural Network
  Classifiers via Knowledge Compilation}.
\newblock In \emph{European Conference on Logics in Artificial Intelligence
  (JELIA), LNCS 14281}, pages 49--64. Springer, 2023.

\bibitem[Bertossi et~al.(2020)Bertossi, Li, Schleich, Suciu, and
  Vagena]{bertossi-et-al:deem}
L.~Bertossi, J.~Li, M.~Schleich, D.~Suciu, and Z.~Vagena.
\newblock {Causality-based Explanation of Classification Outcomes}.
\newblock \emph{Proc. 4th SIGMOD Int. WS on Data Management for End-to-End
  Machine Learning}, pages 70--81, 2020.

\bibitem[Carvalho et~al.(2019)Carvalho, Pereira, and
  Cardoso]{carvahlo-et-al:carvalho2019machine}
D.~V. Carvalho, E.~M. Pereira, and J.~S. Cardoso.
\newblock {Machine Learning Interpretability: A Survey on Methods and Metrics}.
\newblock \emph{Electronics}, 8\penalty0 (8):\penalty0 832, 2019.

\bibitem[Cifuentes et~al.(2024)Cifuentes, Bertossi, Pardal, Abriola, Martinez,
  and Romero]{cifuentes2024distributional}
S.~Cifuentes, L.~Bertossi, N.~Pardal, S.~Abriola, M.~V. Martinez, and
  M.~Romero.
\newblock The distributional uncertainty of the shap score in explainable
  machine learning.
\newblock \emph{arXiv preprint arXiv:2401.12731}, 2024.

\bibitem[Darwiche(2001)]{darwiche2001}
A.~Darwiche.
\newblock {On the Tractable Counting of Theory Models and its Application to
  Truth Maintenance and Belief Revision}.
\newblock \emph{Journal of Applied Non-Classical Logics}, 11\penalty0
  (1-2):\penalty0 11--34, 2001.

\bibitem[Darwiche and Marquis(2002)]{DM2002}
A.~Darwiche and P.~Marquis.
\newblock {A Knowledge Compilation Map}.
\newblock \emph{J. Artif. Int. Res.}, 17\penalty0 (1):\penalty0 229–264,
  2002.
\newblock ISSN 1076-9757.

\bibitem[Garey and Johnson(1979)]{garey1979computers}
M.~R. Garey and D.~S. Johnson.
\newblock \emph{Computers and Intractability}, volume 174.
\newblock Freeman, San Francisco, 1979.

\bibitem[Huang and Marques-Silva(2023)]{marques-silva}
X.~Huang and J.~Marques-Silva.
\newblock {From Robustness to Explainability and Back Again}.
\newblock \emph{ArXiv preprint: 2306.03048}, 2023.

\bibitem[Laneve et~al.(2010)Laneve, Lascu, and
  Sordoni]{laneve-et-al:laneve2010interval}
C.~Laneve, T.~A. Lascu, and V.~Sordoni.
\newblock {The Interval Analysis of Multilinear Expressions}.
\newblock \emph{Electronic Notes in Theoretical Computer Science}, 267\penalty0
  (2):\penalty0 43--53, 2010.

\bibitem[Linardatos et~al.(2020)Linardatos, Papastefanopoulos, and
  Kotsiantis]{linardatos-et-al:linardatos2020explainable}
P.~Linardatos, V.~Papastefanopoulos, and S.~Kotsiantis.
\newblock {Explainable AI: A Review of Machine Learning Interpretability
  Methods}.
\newblock \emph{Entropy}, 23\penalty0 (1):\penalty0 18, 2020.

\bibitem[Lundberg and Lee(2017)]{lundberg2017unified}
S.~M. Lundberg and S.-I. Lee.
\newblock {A Unified Approach to Interpreting Model Predictions}.
\newblock \emph{Advances in Neural Information Processing Systems}, 30, 2017.

\bibitem[Merrick and Taly(2020)]{merrick2020explanation}
L.~Merrick and A.~Taly.
\newblock The explanation game: Explaining machine learning models using
  shapley values.
\newblock In \emph{Machine Learning and Knowledge Extraction: 4th IFIP TC 5, TC
  12, WG 8.4, WG 8.9, WG 12.9 International Cross-Domain Conference, CD-MAKE
  2020, Dublin, Ireland, August 25--28, 2020, Proceedings 4}, pages 17--38.
  Springer, 2020.

\bibitem[Molnar(2020)]{molnar2020interpretable}
C.~Molnar.
\newblock \emph{Interpretable Machine Learning}.
\newblock 2020.

\bibitem[{Nugent, C.}(2018)]{California}
{Nugent, C.}
\newblock {California Housing Prices}.
\newblock
  \url{https://www.kaggle.com/datasets/camnugent/california-housing-prices},
  2018.

\bibitem[Ribeiro et~al.(2016)Ribeiro, Singh, and Guestrin]{ribeiro2016should}
M.~T. Ribeiro, S.~Singh, and C.~Guestrin.
\newblock " why should i trust you?" explaining the predictions of any
  classifier.
\newblock In \emph{Proceedings of the 22nd ACM SIGKDD international conference
  on knowledge discovery and data mining}, pages 1135--1144, 2016.

\bibitem[Roth(1988)]{roth}
A.~E. Roth, editor.
\newblock \emph{{The Shapley Value: Essays in Honor of Lloyd S. Shapley}}.
\newblock Cambridge Univ. Press, 1988.

\bibitem[{S.~Lundberg et al.}(2020)]{shapNature}
{S.~Lundberg et al.}
\newblock {From Local Explanations to Global Understanding with Explainable AI
  for Trees}.
\newblock \emph{Nature Machine Intelligence}, 2\penalty0 (1):\penalty0
  2522--5839, 2020.

\bibitem[Shapley(1953)]{shapley1953}
L.~S. Shapley.
\newblock {A Value for n-Person Games}.
\newblock \emph{Contributions to the Theory of Games}, 2\penalty0
  (28):\penalty0 307--317, 1953.

\bibitem[Shrikumar et~al.(2017)Shrikumar, Greenside, and
  Kundaje]{shrikumar-et-al:shrikumar2017learning}
A.~Shrikumar, P.~Greenside, and A.~Kundaje.
\newblock {Learning Important Features through Propagating Activation
  Differences}.
\newblock In \emph{{International Conference on Machine Learning}}, pages
  3145--3153. PMLR, 2017.

\bibitem[Slack et~al.(2021)Slack, Hilgard, Singh, and
  Lakkaraju]{slack2021reliable}
D.~Slack, A.~Hilgard, S.~Singh, and H.~Lakkaraju.
\newblock {Reliable Post Hoc Explanations: Modeling Uncertainty in
  Explainability}.
\newblock \emph{{Advances in Neural Information Processing Systems}},
  34:\penalty0 9391--9404, 2021.

\bibitem[{\v{S}}trumbelj and Kononenko(2014)]{vstrumbelj2014explaining}
E.~{\v{S}}trumbelj and I.~Kononenko.
\newblock {Explaining Prediction Models and Individual Predictions with Feature
  Contributions}.
\newblock \emph{Knowledge and Information Systems}, 41:\penalty0 647--665,
  2014.

\bibitem[Van~Den~Broeck et~al.(2022)Van~Den~Broeck, Lykov, Schleich, and
  Suciu]{vandenbroeck-et-al:guy22}
G.~Van~Den~Broeck, A.~Lykov, M.~Schleich, and D.~Suciu.
\newblock {On the Tractability of SHAP Explanations}.
\newblock \emph{J. Artif. Intell. Res.}, 74:\penalty0 851--886, 2022.

\end{thebibliography}

\newpage

\section{Appendix}
\label{sec:appendix}

\subsection{Complete proofs from Section~\ref{sec:prelims-multinlinear}} 
\label{sec:app-prelims-multilinear}

\medskip

{\bf Proposition~\ref{prop:general-maximizesInVertices}.} \emph{
Let $f$ be a multilinear polynomial over $n$ variables. Let $\anInterval\subseteq [0,1]^n$ be a hyperrectangle.
Then the maximum and minimum of $f$ restricted to $\anInterval$ is attained in the vertices of $\anInterval$.}

\medskip

Before showing the proposition, we need an auxiliary result. 

\begin{proposition}
\label{prop:optimumInBoundary}
    Given a closed bounded region $\aRegion \subseteq \R^n$ and a multilinear polynomial $\aMultilinearPolynomial$, the latter attains its maximum and minimum in the boundary $\boundary \aRegion$ of $\aRegion$.\footnote{This proof only requires $\aMultilinearPolynomial$ to be linear in one of its variables for the conclusion to follow. In the case of a multilinear polynomial, this holds for all variables.}
\end{proposition}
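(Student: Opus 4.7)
The approach is a standard compactness-plus-linearity argument. Since $\aRegion \subseteq \R^n$ is closed and bounded, it is compact, and $\aMultilinearPolynomial$ is a polynomial and hence continuous, so the Extreme Value Theorem guarantees that $\aMultilinearPolynomial$ attains its maximum and minimum on $\aRegion$. I will argue that any interior maximizer can be moved to a point of $\boundary \aRegion$ without changing the value of $\aMultilinearPolynomial$; the same argument then yields the minimum case by symmetry (or by applying it to $-\aMultilinearPolynomial$, which is still multilinear).

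Let $\aPoint \in \aRegion$ be a maximizer, and assume $\aPoint$ lies in the interior of $\aRegion$ (otherwise we are already done). Pick any coordinate direction, say the first, with unit vector $u$, and consider the axis-parallel line $L = \{\aPoint + t\,u : t \in \R\}$. Let $C$ be the connected component of $L \cap \aRegion$ containing $\aPoint$. Since $\aRegion$ is closed and bounded, $C$ is a closed segment $\{\aPoint + t\,u : t \in [t_-, t_+]\}$, and since $\aPoint$ is interior to $\aRegion$ we have $t_- < 0 < t_+$. The crucial observation is that multilinearity of $\aMultilinearPolynomial$ makes $g(t) \defeq \aMultilinearPolynomial(\aPoint + t\,u)$ an affine function of the single real variable $t$: fixing all coordinates except the first turns $\aMultilinearPolynomial$ into an affine map, because $\aMultilinearPolynomial$ is linear in that coordinate. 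An affine function on a closed interval attains its maximum at an endpoint, hence $\max\{g(t_-), g(t_+)\} \geq g(0) = \aMultilinearPolynomial(\aPoint)$; as $\aPoint$ is a global maximizer, the inequality must be an equality, and one of the two endpoints of $C$ is itself a maximizer of $\aMultilinearPolynomial$ on $\aRegion$.

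It then remains to verify that this endpoint lies in $\boundary \aRegion$. If it were an interior point of $\aRegion$, then some open ball around it would be contained in $\aRegion$, which would extend the connected component $C$ strictly beyond $t_+$ (resp. $t_-$), contradicting the definition of $t_+$ (resp. $t_-$). The main subtlety to keep in mind is that $\aRegion$ is not assumed convex, so $L \cap \aRegion$ could in principle be a disjoint union of several closed intervals; passing to the connected component containing $\aPoint$ is what guarantees that the extremal parameters $t_\pm$ really correspond to boundary points of $\aRegion$. Apart from this, the argument is routine. As the footnote observes, the proof only uses that $\aMultilinearPolynomial$ is affine in a single coordinate, which is all that is needed to construct the affine map $g$.
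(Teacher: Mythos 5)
Your proof is correct and follows essentially the same route as the paper's: both exploit linearity of $\aMultilinearPolynomial$ in a single coordinate to show that an interior maximizer can be slid along an axis-parallel segment to a boundary point of $\aRegion$ without decreasing the value. Your explicit treatment of the connected component of $L \cap \aRegion$ is a slightly more careful justification of the paper's bare assertion that suitable boundary points $\aPoint \pm \varepsilon\,u$ exist, but the underlying argument is identical.
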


\begin{proof}
    Although the proof of this proposition can be found in \cite{laneve-et-al:laneve2010interval}, we show a different argument that we consider simpler.

    Let $\textbf{x}^\star=(x_1^\star, \ldots, x_n^\star) \in R$ be a maximum for $\aMultilinearPolynomial$ over $R$. Since $\aMultilinearPolynomial$ is a multilinear polynomial, we may write it as:

    $$
    \aMultilinearPolynomial(x_1, \ldots, x_n) = x_1 \aMultilinearPolynomial_1(x_2, \ldots, x_n) + \aMultilinearPolynomial_2(x_2, \ldots, x_n)
    $$

    where both $\aMultilinearPolynomial_1$ and $\aMultilinearPolynomial_2$ are multilinear polynomials that do not depend on $x_1$.

    If $\textbf{x}^\star \notin \boundary \aRegion$ then, since $\aRegion$ is bounded, there are reals $\varepsilon_1,\varepsilon_2 > 0$ such that $\textbf{x}^\star + (\varepsilon_1,0,\ldots,0), \textbf{x}^\star - (\varepsilon_2, 0, \ldots, 0) \in \boundary \aRegion$. We show that $\aMultilinearPolynomial_1 (x_2^\star, \ldots, x_n^\star) = 0$: indeed, observe that if $\aMultilinearPolynomial_1(x_2^\star, \ldots, x_n^\star) > 0$, then $\aMultilinearPolynomial(\textbf{x}^\star + (\varepsilon_1, 0,\ldots, 0)) = (x_1^\star + \varepsilon_1) \aMultilinearPolynomial_1(x_2, \ldots, x_n) + \aMultilinearPolynomial_2(x_2, \ldots, x_n) > \aMultilinearPolynomial(\textbf{x}^\star)$ which contradicts $\textbf{x}^\star$ being a maximum. Similarly, but considering $\textbf{x}^\star - (\varepsilon_2,0,\ldots,0)$, it follows that $\aMultilinearPolynomial_1(x_2^\star,\ldots,x_n^\star)$ cannot be negative. Consequently, $\aMultilinearPolynomial_1(x_2^\star, \ldots, x_n^\star) = 0$, which implies that $f(\textbf{x}^\star) = f(\textbf{x}^\star + (\varepsilon_1,0,\ldots,0))$, showing that the polynomial also attains its maximum in $\boundary \aRegion$.

    We can argue analogously to reach the same conclusion for the minimum.
\end{proof}


\begin{proof}[Proof of Proposition~\ref{prop:general-maximizesInVertices}]
Now we are ready to show Proposition~\ref{prop:general-maximizesInVertices}. 
Observe that, if $\aMultilinearPolynomial(x_1,\ldots,x_n)$ is a multilinear polynomial, then $\aMultilinearPolynomial_c(x_2, \ldots, x_n) = \aMultilinearPolynomial(c,x_2,\ldots,x_n)$ is as well a multilinear polynomial for any $c \in \R$. 

Thus, if the bounded region $\aRegion$ is a hyperrectangle $\anInterval\subseteq \mathbb{R}^n$, we can apply Proposition~\ref{prop:optimumInBoundary} recursively $n$ times to conclude that $\aMultilinearPolynomial$ attains its maximum and minimum over the vertices of the hyperrectangle $\anInterval$.
\end{proof}

\medskip

{\bf Theorem~\ref{teo:npcompleteMaximizeMultilinear}.} \emph{
    Given a multilinear polynomial $\aMultilinearPolynomial$, a hyperrectangle $\anInterval = \bigtimes_{i=1}^n [a_i, b_i]$, and a rational $\aBound$, the problem of deciding whether there is an $\textbf{x} \in \anInterval$ such that $\aMultilinearPolynomial(\textbf{x}) \geq \aBound$ is \NPcomplete.}

\begin{proof}
The problem is in \NP, and the proof of this fact was already sketched in Section~\ref{sec:prelims-multinlinear}: due to Proposition~\ref{prop:general-maximizesInVertices}, if $\aMultilinearPolynomial$ attains a value larger than $\aBound$, then there is a vertex $\textbf{x}$ from $\anInterval$ such that $\aMultilinearPolynomial(\textbf{x}) \geq \aBound$. Therefore, a non-deterministic Turing Machine can guess which vertex it is and perform the evaluation.

    The hardness is proved through a reduction from 3-SAT: given a 3-SAT formula $\phi$ on variables $x_1,\ldots, x_n$ and $m$ clauses, we will define a multilinear polynomial $\aMultilinearPolynomial_\phi$ such that $\phi$ is satisfiable if and only if $\aMultilinearPolynomial_\phi(\textbf{x}) \geq 0$ for some $\textbf{x} \in \bigtimes_{i=1}^n [0,1]$.

    Let

    $$
    \phi = \bigwedge_{j=1}^m (l^j_1 \vee l^j_2 \vee l^j_3)
    $$

    where $l^j_i$ are literals, for $i=1,2,3$. We transform each literal into a polynomial by considering the mapping

    $$
    g_{lit}(l) = \begin{cases}
        1 - x_k & l = x_k \\
        x_k & l = \lnot x_k
    \end{cases} 
    $$

    Consequently, we define a mapping for clauses as

    $$
    g_{clause}(l_1 \vee l_2 \vee l_3) = g_{lit}(l_1) g_{lit}(l_2) g_{lit} (l_3)
    $$

    Note that any clause, as long as it does not use the same variable in two of its literals\footnote{This can be assumed without loss of generality.}, is mapped to a multilinear polynomial.
    
    Any assignment of the variables that satisfies a clause $c$ corresponds to an assignment of the variables of $g_{clause}(c)$ that is a root of the polynomial. Similarly, any non-satisfying assignment corresponds to a variable assignment that makes $g_{clause}(c)$ positive.
    
    Finally, we define the multilinear polynomial $\aMultilinearPolynomial_{\phi}$ as

    $$
    \aMultilinearPolynomial_{\phi}(x_1,\ldots,x_n) = - \sum_{j=1}^m g_{clause}(l^j_1 \vee l^j_2 \vee l^j_3).
    $$

    $\aMultilinearPolynomial_\phi$ can be computed in $poly(n+m)$ time given $\phi$, and the correctness of the reduction follows easily from the previous observations about $g_{clause}$.
\end{proof}

\subsection{Complete proofs from Section \ref{sec:complexity-region-max-shap}}

\medskip

{\bf Theorem~\ref{teo:npcomplete_decisiontrees}.}\emph{
    The problem \problemMaxShap{} is \NPhard{} for decomposable and deterministic Boolean circuits. The result holds even when restricted to decision trees.}

\begin{proof}
    We prove the hardness of \problemMaxShap{} through a reduction from \problemVertexCover{}. Let $G=(V,E)$ be a graph and $k\geq 1$ be an integer. We shall define a classifier $M$, an entity $e_0$, a feature $x_0$, an interval $\anInterval$, and a bound $\aBound$ such that $G$ has a vertex cover of size at most $k$ if and only if there is a $\textbf{p} \in \anInterval$ such that $Shap_{M,e_0,x_0}(\textbf{p}) \geq \aBound$. 

    The set of features is $X\defeq \{x_0\}\cup V\cup \{w\}$ where $V$ is the set of nodes of $G$ and $n\defeq |V|$. We consider the \emph{null entity} $e_0$ that assigns $0$ to all features, and the feature $x_0$ to define the SHAP polynomial. The hyperrectangle is $\anInterval = [1,1]\times \bigtimes_{i=1}^n [0,1] \times [\varepsilon, \varepsilon]$. The value of $\varepsilon$ and the bound $\aBound$ will be defined later. Recall that we can assume that the maximum of $Shap_{M,e_0,x_0}$ is attained at a vertex of $\anInterval$, i.e., at a point where $p_{x_0}=1$, $p_{w}=\varepsilon$ and $p_{v} \in \{0,1\}$ for all $v\in V$. 
    The probabilities $(p_v)_{v\in V}$ will be used to ``mark'' which nodes belong to the vertex cover. Formally, for a subset of nodes $C\subseteq V$ of $G$, we associate the vector $\textbf{p}^C=(p_x)_{x\in X}$ such that $p_{x_0}=1$, $p_{w}=\varepsilon$, and for $v\in V$, we have $p_v=0$ if $v\in C$, and $p_v=1$ otherwise. Note that this gives us a bijection between subsets of nodes of $G$ and vertices of $\anInterval$. 

    Intuitively, we devise the classifier $M$ such that $Shap_{M,e_0,x_0}$ has the form 
    \begin{align*}
    Shap_{M,e_0,x_{0}}(\textbf{p}^C) =  - \sum_{\{u,v\} \in E} p_{u}p_{v} I_{u,v}
    - T_{n,\ell} 
    \end{align*}
    where $\ell$ is the size of the subset $C$ and the term $T_{n,\ell}$ grows as $\ell$ grows. The terms $I_{u,v}$ can be made sufficiently large by choosing $\varepsilon$ properly. We choose the bound $\aBound$ to be $T_{n,k}$. If $C$ is a vertex cover, then each term $p_{u}p_{v} I_{u,v}=0$ and hence the first sum of $Shap_{M,e_0,x_{0}}(\textbf{p}^C)$ is $0$. On the other hand, if $C$ is not a vertex cover, we get a small negative value for $Shap_{M,e_0,x_{0}}$, smaller than $\aBound$ in particular, as we get penalized by at least one $I_{u,v}$ (from an uncovered edge $\{u,v\}$). Hence, the only way to obtain $Shap_{M,e_0,x_{0}}(\textbf{p}^C)\geq \aBound$ is to pick $C$ to be, in the first place, a vertex cover, and secondly, one of size $\ell\leq k$.

    Formally, the classifier $M$ is defined as follows. For an entity $e$ over $X$, we have $M(e)=1$ in the following cases (otherwise $M(e)=0$):
    
    \begin{enumerate}
        \item there is an edge $\{u, v\} \in E$ such that $e(x) = 1$ for all $x\in\{x_0, u,v\}$, and $e(x) = 0$ for the remaining features.
        \item $e(x)=1$ for all $x\in \{x_0, w\}$ and $e(x)=0$ for all $x\in V$. 
     \end{enumerate}
    
    By construction, we have $\phi_{M,e_0}(S\cup\{x_0\})=0$ for any $S\subseteq X\setminus\{x_0\}$. Indeed, $e_0(x_0)=0$ but $M(e)=1$ only for entities $e$ with $e(x_0)=1$. On the other hand, for an entity $e'$, the subsets $S\subseteq X\setminus\{x_0\}$ such that $e'\in\consistsWith(e_0,S)$ are precisely the subsets of $Z_{e'}\defeq \{x\in X\setminus \{x_0\}\mid e'(x)=0\}$. Therefore, for a subset $C\subseteq V$ of nodes of $G$, we can write $Shap_{M,e_0,x_0}(\textbf{p}^C)$ as follows:
    \begin{align*}
    & Shap_{M,e_0,x_0}(\textbf{p}^C) \\
    & = \sum_{S\subseteq V\cup\{w\}} c_{|S|} (\phi_{M,e_0}(S\cup\{x_0\}) - \phi_{M,e_0}(S))\\
    & = - \sum_{S\subseteq V\cup\{w\}} c_{|S|} \phi_{M,e_0}(S)\\
    & = - \sum_{S\subseteq V\cup\{w\}} \sum_{e'\in\consistsWith(e_0,S): M(e')=1} c_{|S|} \prob(e'|S)\\
    & = \sum_{e': M(e')=1} \sum_{S\subseteq Z_{e'}} c_{|S|} \prob(e'|S)\\
    & = \sum_{\{u,v\}\in E} p_u p_v I_{u,v} - \varepsilon \sum_{S\subseteq V} c_{|S|} \prod_{y\in V\setminus S}(1-p_y)
    \end{align*}
    where 
    \begin{align*}
    I_{u,v}\defeq \sum_{S\subseteq (V\setminus\{u,v\})\cup\{w\}} c_{|S|} \prod_{y\in ((V\setminus\{u,v\})\cup\{w\})\setminus S} (1-p_y).
    \end{align*}
    
    By the definition of $\textbf{p}^C$, if there is $y\in V\setminus C$ such that $y\notin S$, then $\prod_{y\in V\setminus S}(1-p_y)=0$; otherwise it is $1$. It follows that: 
    \begin{align*}
    Shap_{M,e_0,x_0}(\textbf{p}^C) &= \sum_{\{u,v\}\in E} p_u p_v I_{u,v} - \varepsilon \sum_{S\subseteq V,\  V\setminus C \subseteq S} c_{|S|}\\
    & = \sum_{\{u,v\}\in E} p_u p_v I_{u,v} - \varepsilon \sum_{S\subseteq C} c_{n-|S|}\\
    & = \sum_{\{u,v\}\in E} p_u p_v I_{u,v} - \varepsilon \sum_{i=0}^{\ell}\binom{\ell}{i} c_{n-i}.
    \end{align*}
    Recall $\ell$ denotes the size of $C$. 
    We pick $\varepsilon \defeq \frac{c_{n-1}}{\sum_{i=0}^k \binom{k}{i} c_{n-i}}$ and $\aBound \defeq - \varepsilon \sum_{i=0}^k \binom{k}{i} c_{n-i} = -c_{n-1}$. These values can be computed in $poly(n)$ because $k \leq n$. Similarly, the rest of the reduction can be performed in polynomial time.
    
    Now we show the correctness of the reduction: $G$ has a vertex cover of size at most $k$ if and only if there is a $\textbf{p} \in \anInterval$ such that $Shap_{M,e_0,x_0}(\textbf{p}) \geq \aBound$. Suppose $G$ has a vertex cover of size at most $k$. Then it has a vertex cover $C$ of size precisely $k$. We can take the point $\textbf{p}^C$ since:
    \begin{align*}
    Shap_{M,e_0,x_0}(\textbf{p}^C) & = \sum_{\{u,v\}\in E} p_u p_v I_{u,v} - \varepsilon \sum_{i=0}^{\ell}\binom{\ell}{i} c_{n-i}\\
    & = - \varepsilon \sum_{i=0}^{k}\binom{k}{i} c_{n-i}\\
    & = \aBound.
    \end{align*}
    Suppose now that there is no vertex cover of $G$ of size at most $k$. We claim that $Shap_{M,e_0,x_0}(\textbf{p}) < \aBound$ for all $\textbf{p}\in\anInterval$. It suffices to check this for the vertices of $\anInterval$, that is, for all vectors of the form $\textbf{p}^C$ for subsets of nodes $C\subseteq V$. Now, take an arbitrary subset $C$. Suppose first that  $C$ is not a vertex cover, and that the edge $\{u,v\}$ is not covered, then:
    \begin{align*}
    Shap_{M,e_0,x_0}(\textbf{p}^C) < - p_up_vI_{u,v}  = -I_{u,v}.
    \end{align*}
    Note that $I_{u,v}\geq c_{n-1}$ as $c_{n-1}$ participates in the sum when $S=(V\setminus\{u,v\})\cup\{w\}$. It follows that $Shap_{M,e_0,x_0}(\textbf{p}^C) < -c_{n-1}=\aBound$.
    
    Assume now that $C$ is a vertex cover. Then $Shap_{M,e_0,x_0}(\textbf{p}^C) = -\varepsilon \sum_{i=0}^{\ell}\binom{\ell}{i} c_{n-i}$ where $\ell$ is the size of $C$. We must have $\ell>k$. In particular, $\sum_{i=0}^{\ell}\binom{\ell}{i} c_{n-i} > \sum_{i=0}^{k}\binom{k}{i} c_{n-i}$. We conclude that:
    \begin{align*}
    Shap_{M,e_0,x_0}(\textbf{p}^C) < - \varepsilon \sum_{i=0}^{k}\binom{k}{i} c_{n-i} = \aBound.
    \end{align*}
    
    Finally, we stress that the classifier $M$ can be expressed by a polynomial-sized decision tree $T$: create an accepting path for each entity $e$ satisfying $M(e)=1$. There are $m+1$ such entities, where $m$ is the number of edges in $G$. The tree $T$ is obtained by combining all these accepting paths.
\end{proof}

\subsection{The problem \problemMinShap{}}
\label{sec:app-region-min-shap}

A result analogous to Corollary~\ref{coro:np-region-max-shap} and Theorem~\ref{teo:npcomplete_decisiontrees} can be obtained for the problem \problemMinShap{}.

\begin{theorem}\label{teo:NPcompleteMinShap}

    The problem \problemMinShap{} is in \NP{} for any family $\mathcal{F}$ of models for which it is possible to compute the SHAP score given any underlying product distribution in polynomial time.

    Moreover, it is \NPhard{} for decision trees.
    
\end{theorem}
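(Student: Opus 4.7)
For the \NP{} membership the plan mirrors Corollary~\ref{coro:np-region-max-shap} exactly. Since $\shap_{M,e,x}$ is a multilinear polynomial on the variables $(p_x)_{x\in X}$, Proposition~\ref{prop:general-maximizesInVertices} guarantees that its minimum over the hyperrectangle $\anInterval$ is attained at a vertex of $\anInterval$. A nondeterministic machine therefore guesses a vertex $\textbf{p}$ by selecting either $a_x$ or $b_x$ for every feature $x$, computes $\shap_{M,e,x}(\textbf{p})$ in polynomial time using the assumed tractable SHAP computation for $\mathcal{F}$, and accepts if and only if the value is at most $\aBound$.

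For hardness on decision trees, the cleanest route is a polynomial-time reduction from \problemMaxShap{} restricted to decision trees, which is \NPhard{} by Theorem~\ref{teo:npcomplete_decisiontrees}. Given an instance $(M, e, x, \anInterval, \aBound)$ of \problemMaxShap{}, the reduction outputs $(M', e, x, \anInterval, -\aBound)$, where $M'$ is the decision tree obtained from $M$ by flipping the label at every leaf, so that $M'(e') = 1 - M(e')$ for every entity $e'$. This transformation is linear in $|M|$ and preserves the decision-tree structure.

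Correctness rests on the identity $\phi_{M',e}(S) = 1 - \phi_{M,e}(S)$ for every $S \subseteq X$, which is immediate from $\phi$ being a conditional expectation and $M' = 1 - M$. Substituting into the definition of the SHAP score, the constant $1$ cancels between the consecutive terms $\phi_{M',e}(S \cup \{x\})$ and $\phi_{M',e}(S)$, yielding $\shap_{M',e,x}(\textbf{p}) = -\shap_{M,e,x}(\textbf{p})$ pointwise on $\anInterval$. Consequently, some $\textbf{p}\in\anInterval$ satisfies $\shap_{M,e,x}(\textbf{p}) \geq \aBound$ if and only if some $\textbf{p}\in\anInterval$ satisfies $\shap_{M',e,x}(\textbf{p}) \leq -\aBound$, so the output is a yes-instance of \problemMinShap{} exactly when the input is a yes-instance of \problemMaxShap{}.

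I do not foresee a real obstacle: the sign-flip trick converts maximization into minimization essentially for free, and decision trees are trivially closed under output negation. As a self-contained alternative, one could instead replay the \problemVertexCover{} reduction of Theorem~\ref{teo:npcomplete_decisiontrees} after composing it with the leaf-swapping transformation on the constructed classifier, arriving at the same conclusion with $-\aBound$ in the role of the threshold.
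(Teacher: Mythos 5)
Your proposal is correct and matches the paper's own proof essentially verbatim: \NP{} membership via Proposition~\ref{prop:general-maximizesInVertices} (guess a vertex and evaluate), and \NPhard{}ness by reducing from \problemMaxShap{} using the negated model $1-M$, the identity $\phi_{1-M,e}(S)=1-\phi_{M,e}(S)$, and hence $\shap_{1-M,e,x}=-\shap_{M,e,x}$ with threshold $-\aBound$. Your explicit remark that flipping leaf labels keeps the classifier a decision tree is a detail the paper leaves implicit, but the argument is the same.
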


\begin{proof}
    The \NP{} upper bound follows again by a direct application of Proposition~\ref{prop:general-maximizesInVertices}.

    The hardness follows by a reduction from \problemMaxShap{}. Given $M$, let $1-M$ be the ``negation'' of model $M$:
    \begin{align*}
        (1-M)(e) = 1 - M(e)
    \end{align*}

    Then, for any subset $S\subseteq X$, it holds that
    \begin{align*}
        \phi_{1-M, e}(S) &= \sum_{e' \in \consistsWith(e, S)} \prob(e'|S) (1-M)(e')\\
        &= \sum_{e' \in \consistsWith(e, S)} \prob(e' | S) - \sum_{e' \in \consistsWith(e, S)} \prob(e'|S) M(e')\\
        &= 1 - \phi_{M,e}(S).
    \end{align*}

    And as a consequence,
    \begin{align*}
        &Shap_{1-M, e, x} (\textbf{p})\\
        &= \sum_{S \subseteq X \setminus \{x\}} c_{|S|} \left( \phi_{1-M,e}(S\cup \{x\}) - \phi_{1-M, e}(S)\right)\\
        &= - \sum_{S \subseteq X \setminus \{x\}} c_{|S|} \left( \phi_{M,e}(S \cup \{x\}) - \phi_{M,e}(S) \right)\\
        &= -Shap_{M,e,x}(\textbf{p}).
    \end{align*}

    Thus, $\aBound$ is an upper bound for $Shap_{M,e,x}(\textbf{p})$ for $\textbf{p} \in \anInterval$ if and only if $-\aBound$ is a lower bound of $Shap_{1-M,e,x}(\textbf{p})$ over the same hyperrectangle.
\end{proof}

\subsection{Complete proofs from Section~\ref{sec:complexity-related}}

\medskip

{\bf Theorem~\ref{teo:related_hard}}\emph{
    The problems \problemAmbiguity{} and \problemIrrelevancy{} are \NPhard for decision trees, while \problemDominance{} is \coNPhard.}

\begin{proof}
We start with the hardness of the problem \problemIrrelevancy{}. We give a reduction from \problemVertexCover. Given a graph $G=(V,E)$ and $k\geq 1$ we define a set of features $X$, a model $M$ over features $X$, a feature $x_0$, an entity $e_0$ and an interval $\anInterval$ such that $G$ has a vertex cover of size $k$ if and only if there is a point $\textbf{p} \in \anInterval$ such that $Shap_{M,x_0,e_0}(\textbf{p})=0$.

We define the set of features as 
$$X \defeq \{x_0\}\cup V \cup \{w\}\cup Y \cup Z$$
where $V$ is the set of nodes of $G$, with $n=|V|$, and $Y:=\{y_1,\dots,y_{n-k}\}$, and $Z:=\{z_1,\dots,z_{n-k}\}$.
The hyperrectangle is 
$$\anInterval = [1,1]\bigtimes_{i=1}^n [0,1] \times [\varepsilon, \varepsilon] \bigtimes_{i=1}^{2n-2k} [1,1]$$
where $\varepsilon$ will be defined later on. We consider the \emph{null entity} $e_0$ that assigns $0$ to all features, and the feature $x_0$ to define the SHAP polynomial. 

The idea behind our construction is similar to the one in the proof of Theorem~\ref{teo:npcomplete_decisiontrees}. In this case, we devise a model $M$ such that:
    \begin{align*}
        Shap_{M,e_0,x_0}(\textbf{p}^C) = T_{n,k} - \sum_{\{u, v\} \in E} p_u p_v I_{u,v} - T_{n,\ell}. 
    \end{align*}
    Again, $\ell$ is the size of $C$, and $T_{n,\ell}$ grows as $\ell$ grows. Moreover, $\textbf{p}^C$ denotes the vector associated to the subset of nodes $C\subseteq V$. More precisely, $\textbf{p}^C$ is defined by setting $p_{x_0}=1$, $p_w=\varepsilon$, $p_y=1$ for all $y\in Y$, $p_{z}=1$ for all $z\in Z$, and for $v\in V$, $p_v=0$ if $v\in C$; $p_v=1$ otherwise. In particular, there is a bijection between the subsets of nodes of $G$ and the vertices of $\anInterval$. Note that the first term $T_{n,k}$ does not depend on the set $C$, and consequently $Shap_{M,e_0,x_0}(\textbf{p}^C) = 0$ if $C$ is a vertex cover of size $k$. We need the extra $2(n-k)$ features in $Y\cup Z$ to force $Shap_{M,e_0,x_0}(\textbf{p}^C)$ to have the required form. 

Formally, the classifier $M$ is defined as follows. For an entity $e$ over $X$, we have $M(e)=1$ in the following cases (otherwise $M(e)=0$):
\begin{enumerate}
    \item $e(x)=1$ for all $x\in\{w\}\cup Y$ and $e(x)=0$ for all $x\in \{x_0\}\cup Z$. The values $e(x)$ for $x\in V$ can be arbitrary. \label{itm:1_def_model}
    \item there is an edge $\{u,v\} \in E$ such that $e(x) = 1$ for all $x\in\{x_0, u,v\} \cup Y\cup Z$, and $e(x) = 0$ for the remaining features. \label{itm:2_def_model}
    \item $e(x)=1$ for all $x\in \{x_0, w\}\cup Z$ and $e(x)=0$ for all $x\in V$. The values $e(x)$ for $x\in Y$ can be arbitrary. \label{itm:3_def_model}
 \end{enumerate}

 For a subset $C\subseteq V$, we can write the first term of $Shap_{M,e_0,x_{0}}(\textbf{p}^C)$ as follows:
    \begin{align*}
        &\sum_{S\subseteq V\cup\{w\}\cup Y\cup Z} c_{|S|} \phi_{M,e_0}(S\cup\{x_0\})\\
        & = \sum_{S\subseteq V\cup Z} c_{|S|} \phi_{M,e_0}(S\cup\{x_0\})
    \end{align*}
    since $\phi_{M,e_0}(S\cup\{x_0\})=0$ whenever $S\cap (\{w\}\cup Y)\neq \emptyset$ (this follows by item (\ref{itm:1_def_model}) of the definition of $M$ and the fact that items (\ref{itm:2_def_model}) and (\ref{itm:3_def_model}) only capture entities with $e(x_0)=1$). Moreover, note that if there is $z\in Z$ such that $z\notin S$, then $\phi_{M,e_0}(S\cup\{x_0\})$ has the form $I\cdot (1-p_z)$ for some term $I$, and thus $\phi_{M,e_0}(S\cup\{x_0\})=0$, since $p_z=1$. Hence,
    \begin{align*}
        &\sum_{S\subseteq V\cup Z} c_{|S|} \phi_{M,e_0}(S\cup\{x_0\})\\
        & = \sum_{S\subseteq V} c_{n-k+|S|} \phi_{M,e_0}(S\cup Z\cup\{x_0\}).
    \end{align*}
      
 Note that for $S\subseteq V$, we have:
    \begin{align*}
        \phi_{M,e_0}(S\cup Z\cup\{x_0\}) = p_w\prod_{y\in Y} p_y \sum_{b:V\setminus S\to \{0,1\}} \prod_{x\in V\setminus S}  J_{b,x}
    \end{align*}
    where $J_{b,x} = p_x$ if $b(x)=1$ and $J_{b,x} = 1-p_x$ if $b(x)=0$. Note that $\sum_{b:V\setminus S\to \{0,1\}} \prod_{x\in V\setminus S}  J_{b,x} = 1$, and then we have $\phi_{M,e_0}(S\cup Z\cup\{x_0\}) =  \varepsilon$. 
      It follows that the first term of $Shap_{M,e_0,x_{0}}(\textbf{p}^C)$ is 
            \begin{align*}
  \varepsilon \sum_{i=0}^n \binom{n}{i} c_{n-k+i}. 
      \end{align*}  

 
Denote by $E_1$ and $E_2$ the set of entities defined in items (\ref{itm:2_def_model}) 
and (\ref{itm:3_def_model})
, respectively, of the definition of $M$. Note $E_1\cap E_2=\emptyset$. Recall that, for an entity $e'$, the subsets $S\subseteq X\setminus\{x_0\}$ such that $e'\in\consistsWith(e_0,S)$ are precisely the subsets of $Z_{e'}\defeq \{x\in X\setminus \{x_0\}\mid e'(x)=0\}$. The second term of $Shap_{M,e_0,x_{0}}(\textbf{p}^C)$ can be written as:
    \begin{align*}
        &- \sum_{S\subseteq V\cup\{w\}\cup Y\cup Z} c_{|S|} \phi_{M,e_0}(S)\\
        &= - \sum_{S\subseteq V\cup\{w\}\cup Y\cup Z} \sum_{e': e'(x_0)=1, M(e')=1} c_{|S|} \prob(e'|S)\\
        & = - \sum_{e': e'(x_0)=1, M(e')=1} \sum_{S\subseteq Z_{e'}} c_{|S|} \prob(e'|S)\\
        & = - \sum_{e'\in E_1} \sum_{S\subseteq Z_{e'}} c_{|S|} \prob(e'|S) -  \sum_{e'\in E_2} \sum_{S\subseteq Z_{e'}} c_{|S|} \prob(e'|S) \\
        & = - \sum_{\{u,v\} \in E} p_{u}p_{v} \Bigr[\sum_{S\subseteq V\setminus\{u,v\}} c_{|S|+1} \prod_{x\in V\setminus \{u,v\}\cup S} (1-p_x)\\
         & \qquad\qquad + (1-p_w) \sum_{S\subseteq V\setminus\{u,v\}} c_{|S|} \prod_{x\in V\setminus \{u,v\}\cup S} (1-p_x)\Bigl]\\
         & - \sum_{S\subseteq V\cup Y} c_{|S|} p_w \prod_{z\in Z} p_z \prod_{x\in V\setminus S} (1-p_x) \sum_{b: Y\setminus S\to\{0,1\}} \prod_{y\in Y\setminus S} J_{b,y}
    \end{align*}
      
     where $J_{b,y}$ is defined as above and in the first term of the last equality we separate the sum between those subsets $S$ that include $w$ and those that do not. We have $\sum_{b: Y\setminus S\to\{0,1\}} \prod_{y\in Y\setminus S} J_{b,y} = 1$ and, moreover, $\prod_{x\in V\setminus S} (1-p_x)= 0$ if $V\setminus C\not\subseteq S$. Then $Shap_{M,e_0,x_{0}}(\textbf{p}^C)$ can be written as
    \begin{align*}
        & Shap_{M,e_0,x_{0}}(\textbf{p}^C)\\
        & =   \varepsilon \sum_{i=0}^n \binom{n}{i} c_{n-k+i} - I
   - \varepsilon \sum_{S\subseteq V\cup Y}  c_{|S|} \prod_{x\in V\setminus S} (1-p_x)\\
        &= \varepsilon \sum_{i=0}^n \binom{n}{i} c_{n-k+i} - I
   - \varepsilon \sum_{S\subseteq C\cup Y}  c_{n-\ell + |S|} \\
        &= \varepsilon \sum_{i=0}^n \binom{n}{i} c_{n-k+i} - I
   - \varepsilon \sum_{i=0}^{n-k+\ell}\binom{n-k+\ell}{i} c_{n-\ell + i} \\
    \end{align*}
where $\ell:=|C|$ and         
    \begin{align*}
        & I = \sum_{\{u,v\}\in E} p_{u}p_{v} \Bigr[\sum_{S\subseteq V\setminus\{u,v\}} c_{|S|+1} \prod_{x\in V\setminus \{u,v\}\cup S} \hspace{-0.7em}  (1-p_x)\\
        & \qquad\qquad + (1-\varepsilon) \sum_{S\subseteq V\setminus\{u,v\}} c_{|S|} \prod_{x\in V\setminus \{u,v\}\cup S} \hspace{-0.5em}  (1-p_x)\Bigl].\\
    \end{align*}

   Now we show the correctness of the reduction. Suppose that $C$ is a vertex cover of size $k$. Then:
    \begin{align*}
        &Shap_{M,e_0,x_{0}}(\textbf{p}^C)\\
        & =   \varepsilon \sum_{i=0}^n \binom{n}{i} c_{n-k+i} 
     - \varepsilon \sum_{i=0}^{n-k+k}\binom{n-k+k}{i} c_{n-k + i}\\
        & = 0
    \end{align*}
           
     Suppose there is no vertex cover of size $k$. We claim that $Shap_{M,e_0,x_{0}}(\textbf{p}^C)<0$ for all $C$. Suppose that $C$ is not a vertex cover, and say that the edge $\{u,v\}$ is not covered. Then: 
     
    \begin{align*}
        &Shap_{M,e_0,x_{0}}(\textbf{p}^C)\\
        & \leq   \varepsilon \sum_{i=0}^n \binom{n}{i} c_{n-k+i} - I\\
        &\leq  \varepsilon \sum_{i=0}^n \binom{n}{i} c_{n-k+i} \\
        & \qquad - (1-\varepsilon) \sum_{S\subseteq V\setminus\{u,v\}} c_{|S|} \prod_{x\in V\setminus \{u,v\}\cup S} (1-p_x) \\
        &\leq  \varepsilon \sum_{i=0}^n \binom{n}{i} c_{n-k+i} - (1-\varepsilon) c_{n-2} \\
    \end{align*}

The last inequality holds since $c_{n-2}$ participates in the sum when $S=V\setminus\{u,v\}$. By choosing $\varepsilon$ such that: 
    \begin{align*}
        \frac{1}{\varepsilon} > \frac{\sum_{i=0}^n \binom{n}{i} c_{n-k+i}}{c_{n-2}} +1
    \end{align*}
    we obtain that $Shap_{M,e_0,x_{0}}(\textbf{p}^C) < 0$. Assume now that $C$ is a vertex cover of size $\ell$. We must have $\ell > k$. In this case, we have:
        \begin{align*}
             &Shap_{M,e_0,x_{0}}(\textbf{p}^C)\\
             & =   \varepsilon \sum_{i=0}^n \binom{n}{i} c_{n-k+i} 
     - \varepsilon \sum_{i=0}^{n-k+\ell}\binom{n-k+\ell}{i} c_{n-\ell + i}\\
        \end{align*}
    Note that
        \begin{align*}
             \sum_{i=0}^{n-k+\ell}\binom{n-k+\ell}{i} c_{n-\ell + i}
             & \geq  \sum_{i=\ell-k}^{n-k+\ell}\binom{n-k+\ell}{i} c_{n-\ell + i}\\
             & >  \sum_{i=\ell-k}^{n-k+\ell}\binom{n}{i-(\ell - k)} c_{n-\ell + i}\\
             & = \sum_{j=0}^{n}\binom{n}{j} c_{n-k + j}\\
        \end{align*}
     
Here we use the basic fact $\binom{m}{i}>\binom{m-p}{i-p}$, and the change of variable $j=i-(\ell-k)$. We conclude that $Shap_{M,e_0,x_{0}}(\textbf{p}^C) < 0$. 

Note that $M$ can be computed by a polynomial-sized decision tree: it is the union of three decision trees, one for each case in the definition of $M$. In particular, the decision tree for item (\ref{itm:1_def_model}) ignores all the features in $V$, while the one corresponding to item (\ref{itm:3_def_model}) ignores all the features in $Y$.

\medskip

We can modify the previous reduction to obtain the hardness for \problemAmbiguity{}. 
We simply remove the last features of $Y$ and $Z$, that is, we define $Y:=\{y_1,\dots,y_{n-(k+1)}\}$ and $Z:=\{z_1,\dots,z_{n-(k+1)}\}$, and apply the same reduction. In this case, we have
    \begin{align*}
        & Shap_{M,e_0,x_{0}}(\textbf{p}^C)= \varepsilon \sum_{i=0}^n \binom{n}{i} c_{n-(k+1)+i} - I\\
        &\qquad - \varepsilon \sum_{i=0}^{n-(k+1)+\ell}\binom{n-(k+1)+\ell}{i} c_{n-\ell + i}. \\
    \end{align*}
      
Now, we choose $\varepsilon$ such that:
\begin{align*}
    \frac{1}{\varepsilon} > \frac{\sum_{i=0}^n \binom{n}{i} c_{n-(k+1)+i}}{c_{n-2}} +1.
\end{align*}
           
   If there is no vertex cover of size $k$, then $Shap_{M,e_0,x_{0}}(\textbf{p}^C)\leq 0$ for all $C$. Note that in this case the value could be $=0$, and this may happen for vertex covers $C$ of size $k+1$. Hence, in this case there is no sign change. We claim that, if there is a vertex cover $C$ of size $k$ then there is sign change. First, note that there is always a set $C'$ with value $<0$: it suffices to take any $C'$ that is not a vertex cover (for instance $C'=\emptyset$). We show that $Shap_{M,e_0,x_{0}}(\textbf{p}^C)> 0$. We have that:
    \begin{align*}
         &Shap_{M,e_0,x_{0}}(\textbf{p}^C)\\
         & =   \varepsilon \sum_{i=0}^n \binom{n}{i} c_{n-(k+1)+i} 
     - \varepsilon \sum_{i=0}^{n-1}\binom{n-1}{i} c_{n-k + i}.\\
   \end{align*}
    Then:
    \begin{align*}
        \sum_{i=0}^n \binom{n}{i} c_{n-(k+1)+i} &\geq \sum_{i=1}^n \binom{n}{i} c_{n-(k+1)+i} \\    
        & > \sum_{i=1}^n \binom{n-1}{i-1} c_{n-(k+1)+i} \\
        & = \sum_{j=0}^{n-1} \binom{n-1}{j} c_{n-k+j}. \\
    \end{align*}
    Again, we use $\binom{m}{i}>\binom{m-p}{i-p}$ (in this case for $p=1$), and the change of variable $j=i-1$. We obtain that $Shap_{M,e_0,x_{0}}(\textbf{p}^C) > 0$.

    \medskip

    We conclude by showing the hardness of \problemDominance{}. In particular, 
    we show a reduction from \problemAmbiguity{} to $\overline{\mbox{\problemDominance}}$ (the complement of \problemDominance{}). 

       Let $M$ be a model given by a decision tree over features $X$, $e$ an entity, $x$ a feature, and $\anInterval$ an interval. We will define a new model $M'$, two features $x_1,x_2$ of $M'$, an entity $e'$ and an interval $\anInterval'$ such that there are two points $\textbf{p}_1,\textbf{p}_2 \in \anInterval$ satisfying $Shap_{M,e,x}(\textbf{p}_1) > 0 > Shap_{M,e,x}(\textbf{p}_2)$ if and only if there is a point $\textbf{p}'\in\anInterval'$ such that $Shap_{M',e',x_2}(\textbf{p}') > Shap_{M',e',x_1}(\textbf{p}')$ (that is, $x_1$ does not dominate $x_2$).
    

    The set of features for $M'$ will be $X' = X \cup \{w\}$, with $w$ a feature not in $X$. The features $x_1,x_2$ will be $w,x$ respectively. The entity $e'$ is defined as
    \begin{align*}
        e'(x) = \begin{cases}
            e(x) & x \in X\\
            0 & x = w
        \end{cases}
    \end{align*}

    In order to define the interval for $p_w$, let us consider any interval $I\subseteq [0,1]$, such as $[0,0]$. Indeed, the choice of $I$ is not decisive in this proof. We define $\anInterval' = \anInterval \times I$. 
    
    Finally, the model $M'$ is defined as, given an entity $f$, $M'(f) = M(f|_X)$, where $f|_X$ denotes the restriction of $f$ to the set of features $X$.

    Let $n=|X|$. Note that the values $c_i$ related to the computation of the SHAP score depend on $i$ but also on the number of features $n$. Since the two models $M$ and $M'$ have a different number of features we will write $c_{i,j} = \frac{i! (j-1-i)!}{j!}$ to avoid any ambiguity.

    Since $w$ has no incidence in the prediction of the model, it follows that $\phi_{M',e'}(S\cup \{w\}) = \phi_{M',e'}(S) = \phi_{M,e}(S)$ for any $S \subseteq X$. Then,
    \begin{align*}
        &\phi_{M',e'}(S)\\
        &= \sum_{f \in \consistsWith(e', S)} \prob(f | S) M(f|_X)\\
        &= \sum_{\substack{f \in \consistsWith(e',S) \\ f(w)=0}} \prob(f | S)M(f|_X) + \sum_{\substack{f \in \consistsWith(e',S) \\ f(w)=1}} \prob(f | S) M(f|_X)\\
        &= (1-p_w) \sum_{\substack{f \in \consistsWith(e', S) \\ f(w)=0}} \prob(f|S \cup \{w\}) M(f|_X) \\
        &+ p_w \sum_{\substack{f \in \consistsWith(e', S) \\ f(w)=1}} \prob(f|S \cup \{w\}) M(f|_X)\\
        &= \sum_{f \in \consistsWith(e', S \cup \{w\})} \prob(f|S\cup \{w\}) M(f|_X) = \phi_{M',e'}(S\cup\{w\})
    \end{align*}

    As a consequence, for any $\textbf{p} \in \anInterval'$ it holds that
    \begin{align}\label{eq:null_shap}
        &Shap_{M',e',w}(\textbf{p})\\
        &= \sum_{S \subseteq X' \setminus {w}} c_{|S|,n+1} \left(\phi_{M',e'}(S\cup \{w\}) - \phi_{M',e'}(S)\right)\nonumber \\
        &= 0 \nonumber
    \end{align}

    \newcommand{\phidif}{\mathcal{D}}

    Finally, let $\phidif(S) = \phi_{M',e'}(S\cup\{x\}) - \phi_{M',e'}(S) = \phidif(S \setminus \{w\})$ and observe that: 

    \begin{align}\label{eq:original_shap}
        &Shap_{M',e',x}(\textbf{p}) = \sum_{S \subseteq X' \setminus \{x\}} c_{|S|,n+1} \phidif(S)\nonumber\\
        &= \sum_{\substack{S \subseteq X' \setminus \{x, w\}}} c_{|S|+1,n+1} \phidif(S) + c_{|S|,n+1} \phidif(S) \nonumber \\
        &= \sum_{S \subseteq X' \setminus \{x,w\}} c_{|S|,n} \phidif(S) = Shap_{M,e,x}(\textbf{p})
    \end{align}

    where the third equality holds because $c_{i+1,n+1} + c_{i, n+1} = c_{i, n}$.

    The correctness of the reduction follows directly from Equations~\ref{eq:null_shap} and~\ref{eq:original_shap}, and by observing that the problem \problemAmbiguity{} is hard even when the negative instances are assumed to satisfy $Shap_{M,e,x}(\textbf{p})\leq 0$, for all $\textbf{p}\in\anInterval$ (this follows directly from the previous reduction).
\end{proof}

\end{document}